\theoremstyle{plain}
\newtheorem{theorem}{Theorem}[section]
\newtheorem{example}[theorem]{Example}
\theoremstyle{definition}
\newtheorem{definition}[theorem]{Definition}
\theoremstyle{remark}
\definecolor{blue2}{rgb}{0.0, 0.5, 1.0}
\newcommand{\ClassSet}{\mathcal{Y}}
\newcommand{\notrobust}{0}
\newcommand{\robust}{1}
\newcommand{\certify}[1]{\Tilde{#1}}
\newcommand{\id}[1]{{(#1)}}
\newcommand{\cas}{C}
\newcommand{\pcas}{P}
\newcommand{\vote}{V}
\newcommand{\uvote}{U}
\newcommand{\cond}{\texttt{c}}
\newcommand{\ensembler}{\mathcal{E}}
\newcommand{\idxs}{\texttt{idxs}}
\newcommand{\celoss}{\mathcal{L}^{ce}}
\newcommand{\unifdist}{\textsf{unif}}
\newcommand{\onehot}{\textsf{one-hot}}
\newcommand{\pcondcert}{\texttt{c2}}
\newcommand{\pcondclean}{\texttt{c1}}
\newcommand{\indicator}{\mathds{1}}
\newcommand{\lab}[1]{\ensuremath{#1_\textsf{label}}}
\newcommand{\logit}[1]{\ensuremath{#1_\textsf{logit}}}
\newcommand{\cer}[1]{\ensuremath{#1_\textsf{cert}}}
\newcommand{\cra}{\texttt{CRA}}
\newcommand{\acc}{\texttt{Acc}}
\newcommand{\fpr}{\texttt{FPR}}
\newcommand{\era}{\texttt{ERA}}
\DeclareMathOperator*{\argmax}{argmax}
\DeclareFixedFootnote{\algnote}{} 
\DeclareFixedFootnote{\tabnote}{}
\title{On the Perils of Cascading Robust Classifiers}
\author{%
Ravi Mangal$^*$, Zifan Wang$^*$, Chi Zhang\thanks{Equal Contribution}\\
  Electrical and Computer Engineering\\
  Carnegie Mellon University\\
  Pittsburgh, PA 15213 \\
\texttt{\{rmangal, zifanw, chiz5\}@andrew.cmu.edu}
%   \And
%   Zifan Wang$^*$\\
%   Electrical and Computer Engineering\\
%   Carnegie Mellon University\\
% Pittsburgh, PA 15213 
% \texttt{zifan@cmu.edu}
% \And
%   Chi Zhang$^*$\\
%   Electrical and Computer Engineering\\
%   Carnegie Mellon University\\
%   Pittsburgh, PA 15213 \\
% \texttt{chiz5@andrew.cmu.edu}
\AND
  Klas Leino\\
  School of Computer Science\\
  Carnegie Mellon University\\
  Pittsburgh, PA 15213 \\
\texttt{kleino@cs.cmu.edu}
\And
  Corina P\u{a}s\u{a}reanu \\
  Carnegie Mellon University \\
  and NASA Ames \\
  Moffett Field, CA 94043 \\
\texttt{pcorina@andrew.cmu.edu}
  \And
  Matt Fredrikson \\
  School of Computer Science\\
  Carnegie Mellon University\\
  Pittsburgh, PA 15213 \\
\texttt{mfredrik@cmu.edu}
  % \And
  % Coauthor \\
  % Affiliation \\
  % Address \\
  % \texttt{email} \\
}
\begin{document}

\maketitle

\begin{abstract}
% !TEX root=./main.tex

Ensembling certifiably robust neural networks is a promising approach for improving the \emph{certified robust accuracy} of neural models. 
Black-box ensembles that assume only query-access to the constituent models (and their robustness certifiers) during prediction are particularly attractive due to their modular structure. 
Cascading ensembles are a popular instance of black-box ensembles that appear to improve certified robust accuracies in practice. 
However, we show that the robustness certifier used by a cascading ensemble is unsound. 
That is, when a cascading ensemble is certified as locally robust at an input $x$ (with respect to $\epsilon$), there can be inputs $x'$ in the $\epsilon$-ball centered at $x$, such that the cascade's prediction at $x'$ is different from $x$ and thus the ensemble is not locally robust. 
Our theoretical findings are accompanied by empirical results that further demonstrate this unsoundness. We present \emph{cascade attack} (CasA), an adversarial attack against cascading ensembles, and show that: (1) there exists an adversarial input for up to 88\% of the samples where the 
ensemble claims to be certifiably robust and accurate; and (2) the accuracy of a cascading ensemble under our attack is as low as 11\% when it claims to be certifiably robust and accurate on 97\% of the test set. 
Our work reveals a critical pitfall of cascading certifiably robust models by showing that the seemingly beneficial strategy of cascading can actually hurt the robustness of the resulting ensemble. Our code is available at \url{https://github.com/TristaChi/ensembleKW}.

% We present an alternate black-box ensembling mechanism based on weighted voting which we prove to be sound for robustness certification. 
% Via a thought experiment, we demonstrate that if the constituent classifiers are suitably diverse, voting ensembles can improve certified performance. 
%Our code is available at \url{https://github.com/TristaChi/ensembleKW}.
%But such diversity is to hard to find in practice.

\end{abstract}

% !TEX root=./main.tex

\section{Introduction}
\label{sec:intro}
%- Limitations of current approaches to certified adversarial robustness \\
%- Ensembling as a generic idea for combining weak learners \\
%- Ensembles for robustness - certified or empirical \\
%- We show that existing certifiably robust ensembles are broken \\
%- We propose a correct ensembling scheme for certifiable robustness \\
%- For our certified ensemble to do well, the ensemble constituents need to be diverse yet overlapping \\
%- For diversity, typical approaches are to use sample weights (what other approaches?) \\
%- Empirical observation - certifiably robust training methods are not sensitive to sample weights compared to std training or adversarial training (Madry et al.)
%- Demonstrated for GloRo, Cayley, BCP (?), Kolter-Wong (?)

Local robustness has emerged as an important requirement of classifier models. It ensures that models are not susceptible to misclassifications caused by small perturbations to correctly classified inputs.
A lack of robustness can be exploited by not only malicious actors (in the form of adversarial examples \citep{szegedy13}) but can also lead to incorrect behavior in the presence of natural noise \citep{gilmer19a}.
However, ensuring local robustness of neural network classifiers has turned out to be a hard challenge. Although neural networks can achieve state-of-the-art classification accuracies on a variety of important tasks,
neural classifiers with comparable certified robust accuracies\footnote{Percentage of inputs where the classifier is accurate and certified as locally robust.} ($\cra$, Def.~\ref{def:cra}) remain elusive, even when trained in a robustness-aware manner \citep{madry18,wong2018provable,cohen19,leino21}. 
In light of the limitations of robustness-aware training, ensembling certifiably robust neural classifiers has been shown to be a promising approach for improving certified robust accuracies \citep{wong2018scaling,yang2022on}. 
An ensemble combines the outputs of multiple base classifiers to make a prediction, and is a well-known mechanism for improving classification accuracy when one only has access to weak learners \citep{dietterich2000,bauer1999empirical}.

Ensembles designed to improve $\cra$ take one of two forms. 
\emph{White-box ensembles} \citep{yang2022on,zhang2019enhancing,liu2020enhancing} assume white-box access to the constituent models. They calculate new logits by averaging the corresponding logits of the constituent classifiers. 
For local robustness certification, they treat the ensemble as a single, large model and then use off-the-shelf techniques  \citep{cohen19,weng18,wong2018provable,zhang18efficient} for certification.
\emph{Black-box ensembles} \citep{wong2018scaling,blum2022boosting}, on the other hand, assume only query-access to the constituent classifiers during prediction, and are, therefore, agnostic to their internal details.
They re-use the prediction and certification outcomes of the constituent models to calculate the ensemble's prediction and certificate. Their black-box nature lends them modularity and permits any combination of constituent classifiers, irrespective of their individual certification mechanism, so we focus our efforts on them in this paper. 

\emph{Cascading ensembles} \citep{wong2018scaling,blum2022boosting} are a particularly popular instance of black-box ensembles that appear to improve $\cra$ in practice. They evaluate the constituent classifiers (and their certifiers) in a fixed sequence.
The ensemble's prediction is the output of the first constituent classifier in the sequence that is certified locally robust, defaulting to the last classifier's output if no model can be certified.
Importantly, the cascading ensemble is itself certified locally robust only when at least one of the constituent classifiers is certified locally robust.

\begin{figure}[t]
    \centering
    \includegraphics[width=\textwidth]{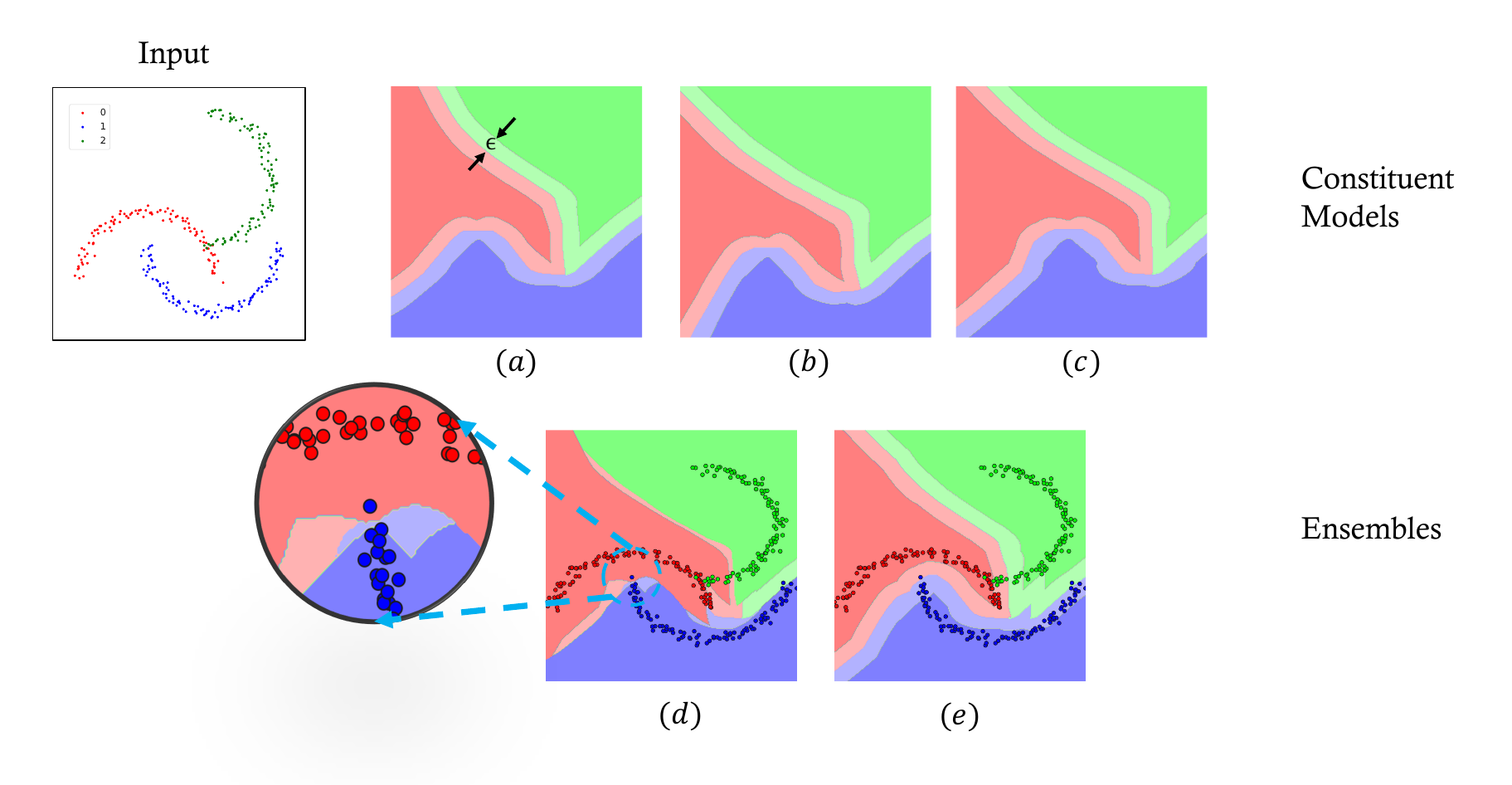}
    \caption{Visualizing classification results of 2D points for constituent models (a-c) and the corresponding Cascading Ensemble (d, Def.~\ref{def:cascading_ensembler}) and Uniform Voting Ensemble (e, Def.~\ref{def:unif_voting_ensembler}). Regions with colors correspond to predictions (0: red, 1: blue, 2: green) made by the underlying model (or ensemble). Darker colors indicate that the accompanying robustness certification of the underlying model (or ensemble) returns 1 and lighter colors are for cases when the certification returns 0. All points receiving 1 for certifications (darker regions) are at least $\epsilon$-away from the other classes in (a)-(c), i.e. \emph{certification is sound} (Def.~\ref{def:soundness-of-F_Tilde}). This property is violated in (d), e.g. points from dark red regions are not $\epsilon$-away from the blue region in the zoomed-in view on the left, but preserved in (e). Namely, voting ensembles are \emph{soundness-preserving} (Def.~\ref{def:sp_ensembler}) while cascading ensembles are not. }  
    \label{fig:2d-example}
\end{figure}

\textbf{Our contributions.} We show in this paper that the local robustness certification mechanism used by cascading ensembles is unsound even when the certifiers used by each of the constituent classifiers are sound (Theorem~\ref{thm:cascading_sound}). In other words, when a cascading ensemble is certified as locally robust at an input $x$, there can, in fact, be 
inputs $x'$ in the $\epsilon$-ball centered at $x$, such that the cascade's prediction at $x'$ is different from $x$. 
Figure~\ref{fig:2d-example} demonstrates this visually on a toy dataset. The cascading ensemble can have points that are less than $\epsilon$ away from the decision boundary, yet the ensemble is certified locally robust at such points (Figure~\ref{fig:2d-example}(d)).
As a consequence of our result, use of a cascading ensemble in any scenario requiring local robustness guarantees is unsafe and existing empirical results that report the $\cra$ of cascading ensembles are not valid.
%Though prior works using cascading ensembles never explicitly claim that the cascade robustness certification mechanism is sound, their comparisons of cascade performance against that of a single \emph{certifiable} model using $\cra$ as a metric can lead an uncareful reader to believe that the cascade really is certifiable.

Guided by our theoretical construction, we propose \emph{cascade attack} (CasA, Algorithm~\ref{alg:attack}), an adversarial attack against cascading ensembles, and conduct an empirical evaluation with the cascading ensembles trained by \citet{wong2018scaling} for MNIST and CIFAR-10 datasets.
With CasA, we show that: (1) there exists an adversarial example for up to 88\% of the samples where the ensemble claims to be certifiably robust and accurate; (2) the empirical robust accuracy of a cascading ensemble is as low as 11\% while it claims to be certifiably robust and accurate on 97\% of the test set; and (3) viewing all experiments as a whole, the empirical robust accuracy of a cascading ensemble is almost always lower than even the $\cra$ of the single best model in the ensemble. Namely, a cascading ensemble is often less robust.
Our results conclusively demonstrate that the unsoundness of the cascading ensemble certification mechanism can be exploited in practice, and cause the ensemble to perform markedly worse than the single best constituent model.

We also present an alternate ensembling mechanism based on weighted voting that, like cascading ensembles, assumes only query-access to the constituent classifiers but comes with a provably sound local robustness certification procedure (Section~\ref{sec:defense}). 
%Our proposed ensemble is inspired by voting-based ensembles that conduct a vote amongst the constituent classifiers but our soundness proof is novel.
We show through a thought experiment that it is possible for a voting ensemble to improve upon the $\cra$ of its constituent models (Section~\ref{sec:thought}), and observe that the key ingredient for the success of voting ensembles is a suitable balance between diversity and similarity of their constituents. 
We leave the design of training algorithms that balance the diversity and similarity of the constituent classifiers as future work.

%To measure the extent to which the unsoundness of the cascading ensemble distorts the ensemble $\cra$, we repeat the experiments by \citet{wong2018scaling} using our sound, voting ensemble (instead of their cascading ensemble) (Section~\ref{sec:hardness_eval}). 
%With the voting ensemble, all the $\cra$ gains of the cascading ensemble over the constituent classifiers are wiped out. 
%These results, however, should not be interpreted to mean that voting ensembles cannot improve upon the $\cra$ of the constituent classifiers. 
%While we were unable to obtain an improvement on a realistic data set, we show through a thought experiment that it is possible for a voting ensemble to outperform their constituent models (Section~\ref{sec:thought}).
%The key ingredient for the success of voting ensembles is a suitable balance between diversity and similarity of their constituents. 
%This observation opens a promising direction for future research, namely, the design of training algorithms that balance the diversity and similarity of the constituent classifiers informed by the needs of our voting ensemble.
%\todo{say something about sparse weights?}%\todo{say something about sparse weights?}
%\todo{should we say something like: While we were unable to obtain an improvement using a realistic data set, we show through a thought experiment that it is possible to improve the \cra through a voting ensemble?}

% !TEX root=./main.tex

\section{Cascading Ensembles}
\label{sec:unsound}

In this section, we introduce our notation and required definitions. We then show the local robustness certification procedure used by cascading ensembles is unsound.
%we demonstrate how local robustness guarantee is violated when locally-robust models are sequentially ensembled. 

\subsection{Certifiable Classifiers and Ensemblers}\label{sec:cascade-local-robustness}
We begin with our notation and necessary definitions. Suppose a neural network $f: \mathbb{R}^d \rightarrow \mathbb{R}^m$ takes an input and outputs the probability of $m$ different classes. The subscript $x_j$ denotes the $j$-th element of a vector $x$. When discussing multiple networks, we differentiate them with a superscript, e.g. $f^\id{1}, f^\id{2}$. Throughout the paper we use the upper-case letter $F$ to denote the prediction of $f$ such that $F(x) = \arg\max_{j \in \ClassSet}\{f_j(x)\}$ where $f_j$ is the logit for class $j$ and $\ClassSet=[m]$.\footnote{$[m]:=\{0, 1, ..., m-1\}$} 
The prediction $F(x)$ is considered $\epsilon$-locally robust at $x$ if all neighbors within an $\epsilon$-ball centered at $x$ receive the same predictions, which is formally stated in Def.~\ref{def:locally-robust}.

\begin{definition}[$\epsilon$-Local Robustness]\label{def:locally-robust}
A network, $F$, is $\epsilon$-locally robust at $x$ w.r.t to norm, $||\cdot||$, if $\forall x' ~.~ ||x'-x|| \leq \epsilon \Longrightarrow F(x') = F(x)$.
% \begin{align*}
%     \forall x' ~.~ ||x'-x|| \leq \epsilon \Longrightarrow F(x') = F(x)
% \end{align*}
\end{definition}

%Certifying the local robustness of a network has been receiving increasing attention from the community as the naive way to verify Def.~\ref{def:locally-robust}, i.e. checking all possible neighbors in the $\epsilon$-ball, is not a practical choice. These smarter certification methods instead rely on solving a corresponding linear~\cite{GeoCERT} or a semi-definite programs~\cite{}, interval propagations~\cite{IBP, BCP, CROWN}, gradient projections~\cite{FGP}, dual networks~\cite{KW}, or Lipschitz computations~\citep{leino21}. 
Though local robustness certification of ReLU networks is NP-Complete \citep{katz17}, due to its importance, the problem has been receiving increasing attention from the community. Proposed certification methods rely on a variety of algorithmic approaches like  
solving corresponding linear~\citep{jordan19provable} or semi-definite programs~\citep{raghunathan2018certified}, interval propagations~\citep{gowal2018effectiveness, lee20lipschitz, zhang18efficient}, abstract interpretation~\citep{singh19}, geometric projections~\citep{fromherz20}, dual networks~\citep{wong2018provable}, or Lipschitz approximations~\citep{leino21,weng18}.

If a certification method is provided for a network $F$, we use $\certify{F}^\epsilon: \mathbb{R}^d \rightarrow \ClassSet \times \{\notrobust, \robust\}$ to denote a \emph{certifiable neural classifier} that returns a prediction according to $F$ and the outcome of the certification method applied to $F$ with respect to robustness radius $\epsilon$. 
We use $\lab{\certify{F}^\epsilon}(x)$ to refer to the prediction and $\cer{\certify{F}^\epsilon}(x)$ to refer to the certification outcome.  
If $\cer{\certify{F}^\epsilon}(x) = \notrobust$, the accompanying robustness certification is unable to certify $F$ (i.e., $\lab{\certify{F}}$) as $\epsilon$-locally robust at $x$. 
When $\epsilon$ is clear from the context, we directly write $\certify{F}$. One popular metric to evaluate the performance of any $\certify{F}$ is Certified Robust Accuracy (\cra).

\begin{definition}[Certified Robust Accuracy]
	\label{def:cra}
		The certified robust accuracy ($\cra$) of a certifiable classifier $\certify{F} \in \mathbb{R}^d \rightarrow \ClassSet \times \{\notrobust, \robust\}$ on a given dataset $S_k \subseteq \mathbb{R}^d\times \ClassSet$ with $k$ samples is given by $\cra(\certify{F},S_k) := 1/k \sum_{(x_i,y_i) \in S_k}\indicator[\certify{F}(x_i)=(y_i,\robust)] $.
		% $$
		% \cra(\certify{F},S_k) := \frac{1}{k} \sum_{(x_i,y_i) \in S_k}\indicator\left[\certify{F}(x_i)=(y_i,\robust) \right] 
		% $$
	\end{definition}
For $\certify{F}$ and its \cra~to be useful in practice without providing false robustness guarantees, it must be \emph{sound} to begin with (Def.~\ref{def:soundness-of-F_Tilde}). 

% A certification method $\certify^\epsilon$ returns either $\robust$ or $\notrobust$ that indicate whether the prediciton $F(x)$ is $\epsilon$-LR or not and is said to be complete if this works out for every point (Def.~\ref{def:soundness-of-certification}).

\begin{definition}[Certification Soundness]~\label{def:soundness-of-F_Tilde} A certifiable classifier, $\certify{F}: \mathbb{R}^d \rightarrow \ClassSet \times \{\notrobust, \robust\}$,  is sound if $\forall x \in \mathbb{R}^d ~.~ \cer{\certify{F}}(x) = \robust  ~~\Longrightarrow~~ \lab{\certify{F}} \text{ is } \epsilon\text{-locally robust at } x$.
% \begin{align*}
% 	\forall x \in \mathbb{R}^d ~.~ \cer{\certify{F}}(x) = \robust  ~~\Longrightarrow~~ \lab{\certify{F}} \text{ is } \epsilon\text{-locally robust at } x.
% \end{align*}
\end{definition}
Notice that if there exist $\epsilon$-close inputs $x, x'$ where $\certify{F}(x) = (y, \robust)$ and $\certify{F}(x') = (y', \notrobust)$, where $y \neq y'$, then it still means that $\certify{F}$ is not sound. We define an \emph{ensembler} (Def.~\ref{def:ensembler}) as a function that combines multiple certifiable classifiers into a single certifiable classifier (i.e., an \emph{ensemble}).
\begin{definition}[Ensembler]\label{def:ensembler}
	Let $\mathbb{\certify{F}} := \mathbb{R}^d \rightarrow \ClassSet \times \{\notrobust, \robust\}$ represent the set of all certifiable classifiers.
	An ensembler  $\ensembler: \mathbb{\certify{F}}^N \rightarrow \mathbb{\certify{F}}$ is a function over $N$ certifiable classifiers that returns a certifiable classifier.
\end{definition}

A \emph{query-access} ensembler formalizes our notion of a black-box ensemble.
\begin{definition}[Query-Access Ensembler]\label{def:qa_ensembler}
	Let $\mathbb{G} := (\ClassSet \times \{\notrobust, \robust\})^N \rightarrow \ClassSet \times \{\notrobust, \robust\}$.
	$\ensembler$ is a query-access ensembler if,
	$ \forall \certify{F}^\id{0}, \certify{F}^\id{1}, \ldots, \certify{F}^\id{N-1} \in \mathbb{\certify{F}}~.~\exists G \in \mathbb{G}~.~\forall x \in \mathbb{R}^d.$ 
	$$
		 \ensembler\left(\certify{F}^\id{0}, \certify{F}^\id{1}, \ldots, \certify{F}^\id{N-1}\right)(x) ~=~ G\left(\certify{F}^\id{0}(x), \certify{F}^\id{1}(x), \ldots, \certify{F}^\id{N-1}(x)\right)
	$$
\end{definition}

Def.~\ref{def:qa_ensembler} says that if $\ensembler$ is \emph{query-access} its output $\certify{F}$ can always be re-written as a function over the outputs of the certifiable classifiers $\certify{F}^\id{0}, \certify{F}^\id{1}, \ldots, \certify{F}^\id{N-1}$. 
Put differently, $\certify{F}$ only has black-box or query-access to classifiers  $\certify{F}^\id{0}, \certify{F}^\id{1}, \ldots, \certify{F}^\id{N-1}$.

Finally, a \emph{soundness-preserving} ensembler (Def.~\ref{def:sp_ensembler}) ensures that if the constituent certifiable classifiers are sound (as defined in Def.~\ref{def:soundness-of-F_Tilde}), the ensemble output by the ensembler is also sound.
\begin{definition}[Soundness-Preserving Ensembler]\label{def:sp_ensembler}
	An ensembler $\ensembler$ is soundness-preserving if,
	$\forall \certify{F}^\id{0}, \certify{F}^\id{1}, \ldots, \certify{F}^\id{N-1} \in \mathbb{\certify{F}}, \certify{F}^\id{0}, \certify{F}^\id{1}, \ldots, \certify{F}^\id{N-1}~\text{are sound} ~~\Longrightarrow~~ \ensembler(\certify{F}^\id{0}, \certify{F}^\id{1}, \ldots, \certify{F}^\id{N-1})~\text{is sound}$.
	% $$
	% \certify{F}^\id{0}, \certify{F}^\id{1}, \ldots, \certify{F}^\id{N-1}~\text{are sound} ~~\Longrightarrow~~ \ensembler(\certify{F}^\id{0}, \certify{F}^\id{1}, \ldots, \certify{F}^\id{N-1})~\text{is sound}
	% $$
\end{definition}

\subsection{Cascading Ensembler Is Not Soundness-Preserving}
\label{sec:cascade-unsoundness}
Cascading ensembles \citep{wong2018scaling, blum2022boosting} are a popular instance of black-box ensembles that appear to be practically effective in improving certified robust accuracies. However, we show that cascading ensembles are not sound.

We define a cascading ensemble to be the output of a \emph{cascading ensembler} (Def.~\ref{def:cascading_ensembler}).
A cascading ensemble evaluates its constituent certifiable classifiers in a fixed sequence. For a given input $x$, the ensemble either returns the prediction and certification outcomes of the first constituent classifier $\certify{F}^\id{j}$ such that 
$\cer{\certify{F}}^\id{j}=\robust$ or of the last constituent classifier in case none of the constituent classifiers can be certified. Clearly, cascading ensemblers are query-access (formal proof in Appendix~\ref{sec:proofs}).

\begin{definition}[Cascading Ensembler]\label{def:cascading_ensembler}
    Let $\certify{F}^\id{0}, \certify{F}^\id{1}, \ldots, \certify{F}^\id{N-1}$ be $N$ certifiable classifiers. A cascading ensembler $\ensembler_\cas: \mathbb{\certify{F}}^N \rightarrow \mathbb{\certify{F}}$ is defined as follows
    \begin{align*}
	    \ensembler_\cas\left(\certify{F}^\id{0}, \certify{F}^\id{1}, \ldots, \certify{F}^\id{N-1}\right)(x) ~:=~
        \begin{cases} 
		\certify{F}^\id{j}(x) & \text{if } \exists j \leq N-1~.~\cond(j)=1 \\
		\certify{F}^\id{{N-1}}(x) & \text{otherwise}
        \end{cases}
        \end{align*}
	where $\cond(j) := 1 \text{ if } (\cer{\certify{F}^\id{j}}(x) = \robust) \text{ and } (\forall i < j, \cer{\certify{F}^\id{i}}(x) = \notrobust)$, and 0 otherwise.
\end{definition}

%It is straightforward to show that cascading ensemblers are query-access (Thm.~\ref{thm:cascading_qa}).
%\begin{theorem}
%\label{thm:cascading_qa}
%	Cascading ensembler $\ensembler_\cas$ is query-access. 
%\end{theorem}
%\begin{proof}
%	See Appendix~\ref{sec:proofs} in supplemental material.
%\end{proof}

Theorem~\ref{thm:cascading_sound} shows that cascading ensemblers are not soundness-preserving, and so a cascading ensemble can be unsound. We show this by means of a counterexample.
\begin{theorem}  
\label{thm:cascading_sound}
	The cascading ensembler $\ensembler_\cas$ is not soundness-preserving. 
\end{theorem}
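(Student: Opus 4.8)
Since \textbf{soundness-preserving} (Def.~\ref{def:sp_ensembler}) is a statement universally quantified over all tuples of constituent classifiers, its negation is existential: it suffices to exhibit a single family $\certify{F}^\id{0}, \ldots, \certify{F}^\id{N-1}$ of \emph{sound} certifiable classifiers (Def.~\ref{def:soundness-of-F_Tilde}) for which the cascade $\ensembler_\cas(\certify{F}^\id{0}, \ldots, \certify{F}^\id{N-1})$ is unsound. The plan is therefore to construct the smallest such counterexample, with $N = 2$, over the one-dimensional input space $\mathbb{R}$ (where all norms coincide) with a fixed radius $\epsilon$.

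The idea driving the construction is to exploit the gap between a classifier being locally robust and a sound certifier actually certifying it: a sound certifier is allowed to be conservative and answer $\notrobust$ even at points where the underlying label map is perfectly robust. I would pick two points $x, x'$ with $\|x - x'\| \leq \epsilon$ and two distinct classes $y \neq y'$. Classifier $\certify{F}^\id{0}$ is designed to predict the constant label $y'$ everywhere --- hence trivially $\epsilon$-locally robust, so any certificate it issues is sound --- but its certifier is tuned to return $\cer{\certify{F}^\id{0}}(x) = \notrobust$ and $\cer{\certify{F}^\id{0}}(x') = \robust$. Classifier $\certify{F}^\id{1}$ predicts the constant label $y$ everywhere and certifies $\robust$ at $x$. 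Both are sound by construction precisely because each has a constant (thus everywhere-robust) label map, so soundness cannot be violated no matter where the certificate fires.

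With this setup, I would trace the cascade on the two points using Def.~\ref{def:cascading_ensembler}. At $x$, since $\cer{\certify{F}^\id{0}}(x) = \notrobust$ but $\cer{\certify{F}^\id{1}}(x) = \robust$, the guard $\cond(1)$ holds (and $\cond(0)$ fails), so the cascade returns $\certify{F}^\id{1}(x) = (y, \robust)$. At $x'$, since $\cer{\certify{F}^\id{0}}(x') = \robust$, the guard $\cond(0)$ holds, so the cascade returns $\certify{F}^\id{0}(x') = (y', \robust)$. Thus the cascade certifies $\robust$ at $x$ yet its label map predicts $y$ at $x$ and $y' \neq y$ at the $\epsilon$-close point $x'$, so $\lab{\cdot}$ of the cascade is not $\epsilon$-locally robust at $x$ --- exactly the negation of Def.~\ref{def:soundness-of-F_Tilde}. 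This establishes that the cascade is unsound and completes the counterexample.

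The only real subtlety, and the step I expect to require the most care, is maintaining soundness of the constituents while still forcing the prediction to flip inside a single $\epsilon$-ball. This is handled by making each constituent's label map constant, which decouples ``being sound'' from ``where the certificate is permitted to fire,'' letting $\certify{F}^\id{0}$ legitimately withhold its certificate at $x$ (handing control to $\certify{F}^\id{1}$) while still certifying the competing class $y'$ at the neighboring point $x'$. One should also confirm the boundary convention in Def.~\ref{def:locally-robust}, namely that $\|x - x'\| \le \epsilon$ is non-strict, so placing $x'$ exactly at distance $\epsilon$ (or strictly inside) both suffice.
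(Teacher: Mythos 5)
Your proposal is correct and takes essentially the same route as the paper's proof: a two-classifier cascade where $\certify{F}^\id{0}$ withholds its certificate at $x$ (so the cascade reports $\certify{F}^\id{1}$'s certified label $y$) but fires at the $\epsilon$-close $x'$ with a different label $y'$. Your version is in fact slightly more careful than the paper's, since making both constituents' label maps constant explicitly guarantees their soundness, a point the paper leaves implicit when it asserts that such $\certify{F}^\id{0}, \certify{F}^\id{1}$ exist.
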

\begin{proof}
We can re-write the theorem statement as,
$\exists \certify{F}^\id{0}, \certify{F}^\id{1}, \ldots, \certify{F}^\id{N-1} \in \mathbb{\certify{F}}$ such that for $\certify{F} := \ensembler_\cas(\certify{F}^\id{0}, \certify{F}^\id{1}, \ldots, \certify{F}^\id{N-1}),
	    \exists x \in \mathbb{R}^d, \cer{\certify{F}}(x) = \robust  ~\centernot{\Longrightarrow}~ \lab{\certify{F}} \text{ is } \epsilon\text{-locally robust at } x$.

We prove by constructing the following counterexample. Consider a cascading ensemble $\certify{F}$ constituted of certifiable classifiers $\certify{F}^\id{0}$ and $\certify{F}^\id{1}$.
$\certify{F}^\id{0}$ and $\certify{F}^\id{1}$ are such there exists an $x$ where

\begin{equation}
\label{eq:cas_sound1}
	\left(\cer{\certify{F}}^\id{0}(x) = \notrobust\right)~\wedge~\left(\certify{F}^\id{1}(x) = (y,\robust)\right)
\end{equation}

Using Def.~\ref{def:cascading_ensembler}, it is true that $\certify{F}(x) = (y,\robust)$.
%\begin{align*}
%	\certify{F}(x) = (y,\robust)
%\end{align*}

Without violating (\ref{eq:cas_sound1}), we can have another point $x'$ such that, 

\begin{equation}
\label{eq:cas_sound2}
	(||x-x'|| \leq \epsilon)~\wedge~(\certify{F}^\id{0}(x') = (y',\robust))~\wedge~(\certify{F}^\id{1}(x') = (y,\notrobust))~\wedge~(y' \neq y)
\end{equation}

Using Def.~\ref{def:cascading_ensembler}, it is true that $\certify{F}(x') = (y',\robust)$.
%\begin{align*}
%	\certify{F}(x') = (y',\robust)
%\end{align*}

Thus, for two points $x, x'$ constructed as above, we show that $\exists x, x', \text{ s.t. } ||x'-x|| \leq \epsilon$, $\cer{\certify{F}}(x) = \robust  ~~\centernot{\Longrightarrow}~~  \lab{\certify{F}}(x') = \lab{\certify{F}}(x)$,
% \begin{align*}
% 	\cer{\certify{F}}(x) = \robust  ~~\centernot{\Longrightarrow}~~  \lab{\certify{F}}(x') = \lab{\certify{F}}(x)
% \end{align*}
which violates the condition of local robustness (Def.~\ref{def:locally-robust}).
\end{proof}

The counterexample constructed in Thm.~\ref{thm:cascading_sound} is not just hypothetical, but something that materializes on real models (see Figure~\ref{fig:2d-example} for a toy example and Section~\ref{sec:eval} for our empirical evaluation).

% !TEX root=./main.tex
\section{Attacking Cascading Ensembles}
\label{sec:attack}
\begin{algorithm2e}[t]
\small
\vspace{0.5em}
	\KwIn{Ensemble $\certify{F} \in \mathbb{\certify{F}}$, constituent models $\certify{F}^\id{0}$, $\ldots$,  $\certify{F}^\id{N-1} \in \mathbb{\certify{F}}$, input $x \in \mathbb{R}^d$, 
	%index $j \in [N]$ of the constituent used for prediction, 
	attack bound $\epsilon \in \mathbb{R}$, and distance metric $\ell_p$}
\KwOut{An attack input $x' \in \mathbb{R}^d$}
\vspace{0.5em}
\Fn{\Fattack{$\certify{F}, \certify{F}^\id{0}~,~\ldots~,~\certify{F}^\id{N-1}~,~x~,~\epsilon~,~\ell_p$}}{
	$y ~:=~ \lab{\certify{F}}(x)$\;
	$\idxs ~:=~ \{i~|~i \in [N] \wedge \lab{\certify{F}^\id{i}}(x) \neq y\}~\cup~\{N-1\}$\;
	\ForEach{$i \in \idxs$}{
		\eIf{$i = N-1$}{
			$x^* ~:=~ x~+~\argmax\limits_{\delta \in \mathbb{B}_p(0,\epsilon)}~\left(\celoss(\onehot(\lab{\certify{F}^\id{i}}(x+\delta)),~\onehot(y))~+~\sum\limits_{k<i} (1 - \indicator\left[\cer{\certify{F}^\id{k}}(x+\delta)\right])\right)$\algnote\;
			%\footnotemark{\label{alg:note}}\;
		}{
			$x^* ~:=~ x~+~\argmax\limits_{\delta \in \mathbb{B}_p(0,\epsilon)}~\left(\indicator\left[\cer{\certify{F}^\id{i}}(x+\delta)\right]~+~\sum\limits_{k<i} (1 - \indicator\left[\cer{\certify{F}^\id{k}}(x+\delta)\right])\right)$\algnote\;
			%\textsuperscript{\ref{alg:note}}\;
		}
		\If{$\lab{\certify{F}}(x^*) \neq y$}{
			\textbf{return} $x^*$\;
		}
	}
	\textbf{return} $x$\;
}
\caption{Cascade Attack (CasA)}
\label{alg:attack}
\end{algorithm2e}
\footnotetext{In our implementation, we use a surrogate version of this objective (see Section~\ref{sec:attack}).}

Section~\ref{sec:cascade-unsoundness} shows that a cascading ensemble does not provide a robustness guarantee.
%that it claims to have. In this section, 
We further show here how one can attack the cascading ensemble and find an adversarial example within the $\epsilon$-ball centered at the input $x$.

\noindent\textbf{Overview of Attack.} Algorithm~\ref{alg:attack} describes the attack algorithm, \emph{cascade attack} (CasA), inspired by the proof of Theorem~\ref{thm:cascading_sound}. Given an input $x$, the goal of the algorithm is to find an input $x'$ in the $\epsilon$-ball centered at $x$ such that the predictions of the cascade at $x$ and $x'$ are different. 
% Since the cascade certifier is unsound, our algorithm is intended to be applicable even at inputs where the cascade declares itself to be certified robust.
%The intuitive idea behind the  algorithm, inspired by the construction showing the unsoundness of the cascade (Proof of Theorem~\ref{thm:cascading_sound}), is to find an input $x'$ that forces the ensemble to use a different constituent model for prediction.
% The design of the algorithm is inspired by the construction showing the unsoundness of the cascade (Proof of Theorem~\ref{thm:cascading_sound}).
The inputs to the algorithm are an ensemble $\certify{F}$, its constituent classifiers $\certify{F}^\id{0}$, $\ldots$,  $\certify{F}^\id{N-1}$,
the input $x$ to be attacked, and the attack distance bound $\epsilon$ as well as distance metric $\ell_p$. The algorithm either returns a successful adversarial input $x'$ such that $||x-x'||_p \leq \epsilon$ and $\lab{\certify{F}}(x) \neq \lab{\certify{F}}(x')$ or it returns the original input $x$ if no adversarial input was found. We use the following notations: $\celoss$ stands for cross-entropy loss, $\onehot$ is the one-hot encoding function, $\indicator$ is the indicator function, and $\mathbb{B}_p(0,\epsilon)$ is the $\ell_p$-ball of radius $\epsilon$ centered at $0$.
 
\noindent\textbf{Preparing Targets.} CasA gets the label $y$ predicted by the ensemble $\certify{F}$ at input $x$ (line 2) to select the constituent models it may attack. The attacker is only interested in a constituent model (by remembering its index $i$) if it predicts a label other than $y$ at $x$ or it is the last one (line 3). We are not interested in attacking a model $\certify{F}^\id{j}$ that predicts $y$ at $x$ because such an effort is bound to fail. $\certify{F}^\id{j}$ is still sound even though the ensemble is not; therefore, no point $x'$ assigned a label other than $y$ by $\certify{F}^\id{j}$ is such that it is both less than $\epsilon$-away from $x$ and $\certify{F}^\id{j}$ is also certifiably robust at $x'$ (the second condition is necessary for $\certify{F}^\id{j}$ to be used for prediction at $x'$ by the ensemble).
%at least $\epsilon$-away from the decision boundary of $\certify{F}^\id{j}$ (to be used as the return of the ensemble).
%% This step is motivated by the observation that, for a successful adversarial input $x'$, the constituent model used for prediction at $x'$ predicts a label other than $y$ and is also certified robust at $x'$ (except for the case when the last model $\certify{F}^\id{N-1}$ is used for prediction). For constituents such that $\lab{\certify{F}^\id{i}}(x) = y$, there cannot be any point within the $\epsilon$-ball centered at $x$ where $\certify{F}^\id{i}$ predicts a different label and is certified robust, and therefore, we filter out such models. 
%Notice the last model $\certify{F}^\id{N-1}$ is an exception and always remembered, i.e. $\idxs$ always includes $N-1$ because $x'$ is an adversarial point to the entire ensemble if the predictions of $x$ and $x'$ are both made by $\certify{F}^\id{N-1}$, i.e. all previous models fail to certify, and $\lab{\certify{F}^\id{N-1}}(x') \neq y$. It does not matter whether $x'$ is $\epsilon$-away from the boundary of $\certify{F}^\id{N-1}$ or not.
%% As an ensemble must return the label of the last model, regardless of its prediction being certifiably robust or not, if all previous ones fail to certify. An 
%% if used as the return of the ensemble at $x'$, the last model only needs to predict a different label but need not to be certified for $x'$ to be a valid adversarial point.
However, the last model $\certify{F}^\id{N-1}$ is an exception and always remembered, i.e. $\idxs$ always includes $N-1$. The reason is that, given an input $x'$, if all models $\certify{F}^\id{i};i<N-1$ fail to be certifiably robust at $x'$, the ensemble uses $\certify{F}^\id{N-1}$ for prediction at $x'$ irrespective of whether $\certify{F}^\id{N-1}$ is itself certifiably robust at $x'$ or not.
 
\noindent\textbf{Attacker's Steps.} For each model index in $\idxs$, we try to find an adversarial example (lines 4-10). An attacker stops as soon as they find a valid adversarial example (lines 9-10). Lines 6 and 8 describe the objective an attacker minimizes to find the adversarial examples. If index $i \neq N-1$, the attacker optimizes $\delta$ such that, at input $x+\delta$, the model $\certify{F}^\id{i}$ is certified robust whereas all other models $\certify{F}^\id{k};k<i$ are not certified robust. This ensures that model $\certify{F}^\id{i}$ is used for prediction at input $x+\delta$ as it is certifiably robust at $x$.  If index $i = N-1$, we still require that all models $\certify{F}^\id{k};k<i$ are not certified robust at $x+\delta$. But instead of requiring that $\certify{F}^\id{i}$ is certified robust at $x+\delta$, we only require that the predicted label at $x+\delta$ be different from $y$. We solve the optimization problems on lines 6 and 8 using projected gradient descent (PGD)~\citep{madry18}.

\noindent\textbf{Surrogate Objectives.} For cases when the certification procedure, i.e. $\indicator[\cer{\certify{F}^\id{i}}(x+\delta)]$, is not differentiable or too expensive to run multiple times, we provide the following cheap surrogate replacements. The intuition underlying the surrogate versions is that, given a model, the distance to the decision boundary from an input is correlated with the margin between the top logit scores of the model at that input. 
%Such a correlation is commonly observed for models trained in an adversarial manner\todo{add citation?}. 
% Therefore, for the problem $\argmax_{\delta \in \mathbb{B}_p(0,\epsilon)} \indicator\left[\cer{\certify{F}^\id{i}}(x+\delta)\right]$, where the goal is to find an input $x+\delta$ such that $\certify{F}^\id{i}$ is certified robust at $x+\delta$ (i.e., far away from the decision boundary), 
For the problem $\argmax_{\delta \in \mathbb{B}_p(0,\epsilon)} \indicator[\cer{\certify{F}^\id{i}}(x+\delta)]$, we try to increase the logit score associated with the desired prediction as much as possible. Then, a surrogate version of the problem is as follows (where $\logit{\certify{F}^\id{i}}$ represents the logit scores produced by model $\certify{F}^\id{i}$):
\begin{align}\label{eq:attack_surrogate1}
	\argmax_{\delta \in \mathbb{B}_p(0,\epsilon)} -\celoss(\logit{\certify{F}^\id{i}}(x+\delta),~\onehot(\lab{\certify{F}^\id{i}}(x)))
\end{align}
 For the problem $\argmax_{\delta \in \mathbb{B}_p(0,\epsilon)} \sum_{k<i} (1 - \indicator[\cer{\certify{F}^\id{k}}(x+\delta)])$, we want the input $x+\delta$ to be as close as possible to the decision boundaries for each of the models by $\certify{F}^\id{k}, k < i$ so that the robustness certifications will fail. The specific predictions $\lab{\certify{F}^\id{k}}(x+\delta)$ of these models do not matter. Towards that end, we aim to make the margin between the logit scores of any model $\certify{F}^\id{k}$ be as small as possible. This leads to the following surrogate problem (where $\unifdist$ is a discrete uniform distribution):
\begin{align}\label{eq:attack_surrogate2}
	\argmax_{\delta \in \mathbb{B}_p(0,\epsilon)} -\sum_{k<i} \celoss(\logit{\certify{F}^\id{k}}(x+\delta),~\unifdist)
\end{align}

% !TEX root=./main.tex
\section{Empirical Evaluation}
\label{sec:eval}
%\footnotetext{Uses \emph{exact} mode of dual networks based local robustness certification during training.} 
The goal of our empirical evaluation is to demonstrate the extent to which the unsoundness of the cascading ensembles manifests in practice and can be exploited by an adversary, i.e. CasA.
% \paragraph{Experimental Setup.}
% %The goal of our empirical evaluation is to measure the extent to which the unsoundness of the cascading ensembles distorts the ensemble $\cra$.
For our measurements, we use the $\ell_\infty$ and $\ell_2$ robust cascading ensembles constructed by \citet{wong2018scaling} for MNIST~\citep{lecun1998gradient} and CIFAR-10~\citep{krizhevsky09} datasets. 
The constituent classifiers in these cascades use a local robustness certification mechanism based on dual networks~\citep{wong2018scaling}. 
Each cascade includes between 2-7 certifiable classifiers with the same architecture (except for the $\ell_\infty$ robust, CIFAR-10 Resnet cascades that include only a single constituent model, and are hence not considered in our evaluation).
% Moreover, the constituent classifiers are trained in a sequential manner such that the later constituents of the cascade are trained only on those 
% examples that the previous constituents of the cascade cannot certify. 
The training code and all the constituent models in the ensembles are made available by \citet{wong2018scaling} in a public repository~\citep{wongcode}.
%\footnote{\url{https://github.com/locuslab/convex_adversarial}}

% We attack the cascading ensemble with CasA. Certifying with dual networks~\cite{wong2018scaling} is differentiable but extremely expensive. To run the attack more efficiently, we use the surrogate replacements in Section~\ref{sec:surrogate} and take 100 steps using PGD (other hyper-parameters to follow in Appendix~\ref{sec:hparams of tabel 1}) to empirically minimize the objectives. 

We report the certified robust accuracy (\cra) and standard accuracy (\acc) for the cascading ensemble as well as the single best constituent model in the ensemble. While the certifier for a single model is sound,  the ensemble certifier is unsound and the reported ensemble $\cra$ is an over-estimate. We therefore measure the empirical robustness of the ensemble under CasA. Certifying with dual networks~\citep{wong2018scaling} is differentiable but extremely expensive. To run the attack more efficiently, we use the surrogate replacements in Section~\ref{sec:attack} and take 100 steps using PGD (other hyper-parameters to follow in Appendix~\ref{sec:hparams of tabel 1}) to empirically minimize the objectives. After the attack, we report the false positive rate (\fpr), i.e. \% of test inputs for which an adversarial example is found within the $\epsilon$-ball, and the empirical robust accuracy (\era), i.e. \% of test set where the cascade is empirically robust (i.e., our attack failed). All our experiments were run on a pair of NVIDIA TITAN RTX GPUs with 24 GB of RAM each, and a 4.2GHz Intel Core i7-7700K with 64 GB of RAM.

\begin{table}[!t]
	\caption{Results on models pre-trained by \citet{wong2018scaling} for $\ell_\infty$ (top) and $\ell_2$ (bottom) robustness. \cra: \% of test set where model is certified robust and accurate. \acc: \% of test set where model is accurate. \fpr: among all test inputs where cascade is certified robust and accurate, \% of inputs for which an adversarial example is found within the $\epsilon$-ball using our ensemble attack (i.e., false positive rate). \era: \% of test set where the cascading is empirically robust (i.e., our attack failed) and accurate (\era~of a single model is always greater or equal to its~\cra~because of \emph{soundness} and therefore not included). The unsoundness of cascade certification is shown by the high false positive rates (\fpr).}
    \label{tab:ensemble_attack}
    \centering
    \resizebox{0.9\textwidth}{!}{
    \begin{tabular}{ c c c | c c | c c c c }
    \toprule
    \toprule
    {$\ell_\infty$} & {} & {} & \multicolumn{2}{c}{Single Model} & \multicolumn{4}{|c}{Cascading Ensemble} \\
	    Dataset & Model & $\epsilon$ & \cra(\%) & \acc(\%) & unsound \cra(\%) & \fpr(\%)&  \acc(\%) & \era(\%) \\
    \midrule
	    MNIST & Small, Exact%\tabnote 
	    & 0.1 & 95.54 & 98.96 & 96.33 & 88.71&96.62 &11.17 \\
    MNIST & Small & 0.1 & 94.94 & 98.79 & 96.07 & 81.93&96.24 &17.51\\
    MNIST & Large & 0.1 & 95.55 & 98.81 & 96.27 & 86.37&96.42  &13.27\\
    \midrule
    MNIST & Small & 0.3 & 56.21 & 85.23 & 65.41 & 88.87&65.80  &7.67\\
    MNIST & Large & 0.3 & 58.02 & 88.84 & 65.50 &85.27 &65.50  &9.65\\
    \midrule
    CIFAR10 & Small & 2/255 & 46.43 & 60.86 & 56.65 & 11.51&56.65  &50.13\\
    CIFAR10 & Large & 2/255 & 52.65 & 67.70 & 64.87 & 10.47&65.13   & 58.15\\

    \midrule
    CIFAR10 & Small & 8/255 & 20.58 & 27.60 &28.32 &16.00 &28.32 & 23.79\\
    CIFAR10 & Large & 8/255 & 16.04 & 19.01 & 20.83 &17.67&20.83   &17.15\\
    \bottomrule\\
    \end{tabular}}
    \resizebox{0.9\textwidth}{!}{
    \begin{tabular}{ c c c | c c | c c c c }
    \toprule
    \toprule
    {$\ell_2$} & {} & {} & \multicolumn{2}{c}{Single Model} & \multicolumn{4}{|c}{Cascading} \\
	    Dataset & Model & $\epsilon$ & \cra(\%) & \acc(\%) & unsound \cra(\%) & \fpr(\%)&  \acc(\%) & \era(\%) \\
    \midrule
	    MNIST & Small, Exact%\tabnote
	    & 1.58 & 43.52 & 88.14 & 75.58 & 44.72&80.43& 43.46\\
    MNIST & Small & 1.58 & 43.34 & 87.73 & 74.66 & 40.93&79.07 &45.73\\
    MNIST & Large & 1.58 & 43.96 & 88.39 & 74.50 & 51.95&74.99  &35.81\\
    \midrule
    CIFAR10 & Small & 36/255 & 46.05 & 54.39 & 49.89 & 3.61 & 51.37  &49.27\\
    CIFAR10 & Large & 36/255 & 50.26 & 60.14 & 58.72 & 2.70 & 58.76  &57.17 \\
    CIFAR10 & Resnet & 36/255 & 51.65 & 60.70 &58.65 &3.41 & 58.69 & 56.68\\
    \bottomrule\\
    \end{tabular}}
\end{table}

Table~\ref{tab:ensemble_attack} shows the results for $\ell_\infty$ robustness (top) and $\ell_2$ robustness (bottom).
Each row in the table represents a specific combination of dataset (MNIST or CIFAR-10), architecture (Small or Large convolutional networks), and $\epsilon$ value used for local robustness certification. 
The structure of the table is the same as Tables 2 and 4 in~\citep{wong2018scaling}, except we add the columns reporting $\fpr$ and $\era$.

\textbf{Summary of Results.}
We see from Table~\ref{tab:ensemble_attack} that, irrespective of the dataset, model, $\epsilon$ value, or $\ell_p$ metric under consideration, our attack can find false positives, with false positive rates (\fpr) as high as 88.87\%.
In other words, there always exist test inputs where the ensemble is accurate and declares itself to be certified robust, but our attack is able to find an adversarial example. 
This result demonstrates that the unsoundness of the cascading ensemble certification mechanism is not just a problem in theory but it can be exploited by adversaries in practice. 
More strikingly, the empirical robust accuracy (\era) of the ensemble is often significantly lower than the certified robust accuracy (\cra) of the best constituent model. 
Since the $\era$ of a model is an upper-bound of its $\cra$, the actual $\cra$ of the ensemble can be no larger than the reported $\era$. This result shows that the use of a cascading ensemble can actually hurt the robustness instead of improving it. 
%Additionally, it demonstrates the effectiveness of our attack algorithm.

\textbf{Attack Efficiency.} In Table~\ref{tab:the efficiency of the attack}, we compare the attack results of CasA using the original objectives, i.e. dual networks~\citep{wong2018scaling}, and using surrogate replacements. Because the ensemble on MNIST contains more constituent models, it uses more memory with dual networks compared to CIFAR10. Our report of run time and memory usage shows that using surrogate replacements allows us to run attacks with larger batch size, less memory and time to reach the same level of performance.

\begin{table}[!t]
	\caption{Run time and peak memory usage of CasA. Results are reported on one Titan RTX.}
    \label{tab:the efficiency of the attack}
    \centering

    \resizebox{\textwidth}{!}{
    \begin{tabular}{ c c c c c c c c c c}
    \toprule
    \toprule

	Dataset & Model & \# of Models & $\epsilon$ & Objective &unsound \cra(\%)  & \era(\%) & Batch Size & Mem./Batch (MB) & Time/Batch (Min) \\
    \midrule
	MNIST & Small & 7 & 0.3 ($\ell_\infty$) & dual networks & 65.41 & 4.93 & 32 & 12565 & 4.34 \\
	MNIST & Small & 7 & 0.3 ($\ell_\infty$) & our surrogates & 65.41 & 7.67 & 32 & 4548 & 0.06\\
    \midrule
	CIFAR10 & Small & 5 & 2/255 ($\ell_\infty$) & dual networks & 28.32 & 23.79 & 32 & 10124 & 2.43 \\
	CIFAR10 & Small &5 & 2/255 ($\ell_\infty$) & our surrogates & 28.32  &22.92 & 32 & 4898 & 0.06\\
    \bottomrule\\
    \end{tabular}}
\end{table}

% !TEX root=./main.tex

\section{A Query-Access, Soundness-Preserving Ensembler}
\label{sec:defense}

We present a query-access, soundness-preserving ensembler based on weighted voting in this section.
Voting is a natural choice for the design of a query-access ensembler but ensuring that the ensembler is soundness-preserving can be subtle.
Section~\ref{sec:voting_ensemble} defines our ensembler and proves that it is soundness-preserving.
%We also present a thought experiment demonstrating that a weighted voting ensemble can significantly improve upon the $\cra$ of its constituent models.
In Section~\ref{sec:thought}, we present a thought experiment demonstrating that it is possible for a voting ensemble to significantly improve upon the $\cra$ of its constituent models.
Appendix~\ref{sec:learn_weights} describes our algorithm for learning the weights to be used in weighted voting, and Appendix~\ref{sec:more_eval} presents initial empirical results with the voting ensemble.
Our results show that improving the $\cra$ via a voting ensemble can be difficult on realistic datasets since it requires the ensemble to demonstrate a suitable balance between diversity and similarity of its constituents, but we believe that this is a fruitful direction for future research.

\subsection{Weighted Voting Ensemble}
\label{sec:voting_ensemble}
% OLD NOTATION

% \begin{definition}[Voting Ensemble]\label{def:voting}
%     Let $\certify{F}^\id{0}, \certify{F}^\id{1}, ..., \certify{F}^\id{N-1}$ be $N$ models with certifications and $N$ is odd. A Voting Ensemble $\vote: \mathbb{R}^n \rightarrow \ClassSet$ and its corresponding certification $\certify{\vote}: \mathbb{R}^n \rightarrow \{\notrobust, \robust\}$ of these models are 
%     \begin{align}
%         \vote(x) = \arg\max_j \sum^{N-1}_{k=0} \mathbb{I}[F^\id{k}(x) = j], \vote(x) = \begin{cases} 
%             1 & \text{if } v(x, \vote(x)) > \Notrobust + \max_{j \neq \vote(x)}v(x, j)  \\
%             0 & \text{otherwise}
%         \end{cases}
%     \end{align} where $v(x, j) = \sum^{N-1}_{k=0} \certify{F}^\id{k}(x) * \mathbb{I}[(F^\id{k}(x) = j)]$ and $\Notrobust =  \sum^{N-1}_{k=0} (1-\certify{F}^\id{k})$.
% \end{definition}

% \begin{theorem}
%     If $\certify{F}^\id{0}, \certify{F}^\id{1}, ..., \certify{F}^\id{N-1}$ are sound, $\certify{\vote}$ is sound. Namely, 
%     \begin{align}
%         \forall x \in \mathbb{R}^d, \certify{\vote}(x) = \robust  \Longrightarrow \vote \text{ is } \epsilon\text{-locally robust at } x.
%     \end{align}
% \end{theorem}

%- Introduce intuition of voting ensembles here.
Voting ensembles run a vote amongst their constituent classifiers to make a prediction. 
In the simplest case, each constituent has a single vote that gets assigned to their predicted label. The label with the maximum number of votes is chosen as the ensemble's prediction. 
More generally, weighted voting allows a single constituent to be allocated more than one vote. The decimal number of votes allocated to each constituent is referred to as its \emph{weight}. 
For simplicity, we assume that weights of the constituents in an ensemble are normalized so that they sum up to 1. 
We use weighted voting to not only choose the ensemble's prediction but to also decide its certification outcome. The interaction between voting and certification is subtle and needs careful design to ensure that the certification procedure is sound.

\paragraph{Extra Notations.} 
%Let $v_x^w(j, c)$ denote the number of votes on input $x$ for label $j \in \ClassSet$ with certification outcome $c \in \{0,1\}$, assuming the certifiable classifiers in the ensemble are assigned weights from $w \in [0,1]^N$.
Let $v_x^w(j, c)$ denote the total number of votes allocated to certifiable classifiers, $\certify{F}^\id{i}$, in the ensemble that output $(j, c)$.
More formally, for an input $x$, label $j \in \ClassSet$, certification outcome $c \in \{0,1\}$,weight $w \in [0,1]^N$, and a set of constituent certifiable classifiers, $\certify{F}^\id{0}, \ldots, \certify{F}^\id{N-1}$,
let $v_x^w(j, c) := \sum_{i = 0}^{N-1} w_i * \indicator[\certify{F}^\id{i}(x) = (j, c)]$.
% given according to Equation~\ref{eq:vote_function}.
% \begin{equation}
% \label{eq:vote_function}
% v_x^w(j, c) := \sum_{i = 0}^{N-1} w_i * \indicator\left[\certify{F}^\id{i}(x) = (j, c)\right]
% \end{equation}
We find it useful to use $v_x^w(*, c)$ to denote the number of votes for any class with certificate, $c$; i.e., $v_x^w(*, c) = \sum_{j\in\ClassSet} v_x^w(j, x)$.
Likewise, we will use $v_x^w(j)$ to denote the number of votes for class $j$ regardless of certificate; i.e., $v_x^w(j) = v_x^w(j, 0) + v_x^w(j, 1)$. 

%Using $v_x^w$, we define a \emph{Weighted Voting Ensembler} according to Def~\ref{def:voting_ensembler}.

\begin{definition}[Weighted Voting Ensembler]\label{def:voting_ensembler}
    Let $\certify{F}^\id{1}, ..., \certify{F}^\id{N}$ be $N$ certifiable classifiers. 
    A weighted voting ensembler, $\ensembler_\vote^w: \mathbb{\certify{F}}^N \rightarrow \mathbb{\certify{F}}$ is defined as follows
    $$
	\certify{F}(x) := \ensembler_\vote^w(\certify{F}^\id{0}, \certify{F}^\id{1}, \ldots, \certify{F}^\id{N-1})(x) :=
	\big(\lab{\certify{F}}(x), \cer{\certify{F}}(x)\big), \quad	\lab{\certify{F}}(x) := \argmax_j\Big\{v_x^w(j)\Big\}\footnote{In case of a tie, we assume that the label corresponding to the logit with the lowest index is returned.}
    $$
    $$ \text{where  }
	\cer{\certify{F}}(x) := \begin{cases}
		1 & \text{if $\forall j \neq \lab{\certify{F}}(x)$ ~.~ $v_x^w(\lab{\certify{F}}(x), 1) > v_x^w(*, 0) + v_x^w(j, 1)$} \\
    0 & \text{otherwise}
    \end{cases}
    $$
\end{definition}

%Like the cascading ensembler, the weighted voting ensembler is also query-access (Thm.~\ref{thm:voting_qa}).
%\begin{theorem}
%\label{thm:voting_qa}
%Weighted voting ensembler $\ensembler_\vote^w$ is query-access.
%\end{theorem}
%\begin{proof}
%	See Appendix~\ref{sec:proofs} in supplemental material.
%\end{proof}

The prediction of the weighted voting ensemble is the label receiving the maximum number of votes regardless of the certificate.
However, for the certification outcome, the ensemble has to consider the certificates of the constituent models. 
The ensemble should be certified robust only if its prediction outcome, i.e., the label receiving the maximum number of votes (regardless of the certificate), can be guaranteed to not change in an $\epsilon$-ball.
The condition under which $\cer{\certify{F}}(x) = 1$ ensures this is the case, and allows us to prove that weighted voting ensemblers are soundness-preserving (Theorem~\ref{thm:voting_sound}).
A key observation underlying the condition is that only constituent classifiers that are not certified robust at the current input can change their predicted label in the $\epsilon$-ball, and, in the worst case, transfer all their votes ($v_x^w(*, 0)$) to the label with the second highest number of votes at $x$. We believe that our proof of soundness-preservation is of independent interest.
We also note that weighted-voting ensemblers are query-access (formal proofs in Appendix~\ref{sec:proofs}).

\begin{theorem}
\label{thm:voting_sound}
The weighted voting ensembler $\ensembler_\vote^w$ is soundness-preserving.
\end{theorem}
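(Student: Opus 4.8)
The plan is to prove soundness directly from the certification condition in Definition~\ref{def:voting_ensembler}, using the soundness of each constituent classifier as the only external ingredient. Fix sound constituents $\certify{F}^\id{0}, \ldots, \certify{F}^\id{N-1}$, let $\certify{F} := \ensembler_\vote^w(\certify{F}^\id{0}, \ldots, \certify{F}^\id{N-1})$, and fix an arbitrary $x$ with $\cer{\certify{F}}(x) = \robust$. Writing $L := \lab{\certify{F}}(x)$, by Definition~\ref{def:soundness-of-F_Tilde} it suffices to show that $\lab{\certify{F}}(x') = L$ for every $x'$ with $\|x' - x\| \leq \epsilon$; that is, $L$ must remain the strict vote winner throughout the $\epsilon$-ball.

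First I would split the constituents according to their certificate \emph{at $x$}. For each $i$ with $\cer{\certify{F}^\id{i}}(x) = \robust$, constituent soundness (Def.~\ref{def:soundness-of-F_Tilde}) guarantees $\lab{\certify{F}^\id{i}}$ is $\epsilon$-locally robust at $x$, so this classifier casts its weight $w_i$ on the \emph{same} label at every $x'$ in the ball. The remaining constituents, whose total weight is exactly $v_x^w(*, \notrobust)$, are unconstrained and may shift their votes to any label. From this bookkeeping I would derive two inequalities valid for every $x'$ in the ball: a lower bound $v_{x'}^w(L) \geq v_x^w(L, \robust)$ (the certified-robust $L$-voters stay on $L$), and, for each competing label $j \neq L$, an upper bound $v_{x'}^w(j) \leq v_x^w(j, \robust) + v_x^w(*, \notrobust)$ (only the certified-robust $j$-voters, plus in the worst case the entire uncertified pool, can land on $j$).

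Combining these bounds with the certification condition $v_x^w(L, \robust) > v_x^w(*, \notrobust) + v_x^w(j, \robust)$ then yields, for every $x'$ in the ball and every $j \neq L$,
\begin{align*}
v_{x'}^w(L) ~\geq~ v_x^w(L, \robust) ~>~ v_x^w(*, \notrobust) + v_x^w(j, \robust) ~\geq~ v_{x'}^w(j).
\end{align*}
Because the inequality is strict, $L$ is the \emph{unique} maximizer of the vote at $x'$, so the tie-breaking convention in Definition~\ref{def:voting_ensembler} is irrelevant and $\lab{\certify{F}}(x') = L$. As $x'$ was arbitrary, $\lab{\certify{F}}$ is $\epsilon$-locally robust at $x$, and since $x$ was an arbitrary certified point, $\certify{F}$ is sound; hence $\ensembler_\vote^w$ is soundness-preserving.

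The hard part will be establishing the worst-case upper bound on $v_{x'}^w(j)$ correctly: one must argue that a certified-robust constituent voting for some label other than $j$ at $x$ can never contribute to $j$ at $x'$ (its label is pinned by soundness), so the only contributors to $j$ are the certified-robust $j$-voters together with the uncertified pool. Getting this accounting exactly right — and confirming that it matches the precise quantity $v_x^w(*, \notrobust) + v_x^w(j, \robust)$ appearing in the certification condition — is the crux of the argument; the remaining steps are a routine chain of inequalities.
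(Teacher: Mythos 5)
Your proposal is correct and uses essentially the same argument as the paper's proof: the same lower bound $v_{x'}^w(L) \geq v_x^w(L,\robust)$ from constituent soundness, the same worst-case upper bound $v_{x'}^w(j) \leq v_x^w(j,\robust) + v_x^w(*,\notrobust)$, and the same chaining through the certification condition. The only cosmetic difference is that you argue directly for all competitors $j \neq L$ while the paper frames it as a contradiction with a single competing label $j_2$.
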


%We define a \emph{uniform voting ensembler} as a special case of weighted voting ensemblers where each constituent model gets the same weight.
\begin{definition}[Uniform Voting Ensembler]\label{def:unif_voting_ensembler}
    Let $\certify{F}^\id{0}, ..., \certify{F}^\id{N-1}$ be $N$ certifiable classifiers. 
	The uniform voting ensembler, $\ensembler_\uvote: \mathbb{\certify{F}}^N \rightarrow \mathbb{\certify{F}}$ is a weighted voting ensembler that assigns equal weights to each classifier, i.e. $\ensembler_\uvote = \ensembler_\vote^w$ where $\forall i \in \{0,\ldots,N-1\}. w_i = 1/N$.
\end{definition}

\subsection{Effectiveness of Voting: A Thought Experiment}
\label{sec:thought}
%Our evaluation suggests that improving upon the $\cra$ of the constituent models of a sound, query-access ensemble is difficult in practice.
Voting ensembles require the constituents to strike the right balance between diversity and similarity to be effective.
In other words, while the constituents should be accurate and robust in different regions of the input space (diversity), these regions should also have some overlap (similarity).
We conduct a thought experiment using a simple hypothetical example (Example.~\ref{ex:voting_improve}) where such a balance is struck.
The existence of this example provides evidence and hope that voting ensembles can improve the $\cra$.% of their constituents. 
We present the example informally here. The detailed, rigorous argument is in Appendix~\ref{sec:proofs}.

\begin{example}  
\label{ex:voting_improve}
%	Weighted voting ensembler $\ensembler_\vote^w$ can improve certified robust accuracy. 
Assume that we have a uniform voting ensemble $\certify{F}$ with three constituent classifiers $\certify{F}^\id{0}$, $\certify{F}^\id{1}$, and $\certify{F}^\id{2}$. 
Assume that on a given dataset $S_k$ with 100 samples, each of the constituent classifiers has $\cra$ equal to 0.5.
	Let's say that the samples in $S_k$ are ordered such that $\certify{F}^\id{0}$ is accurate and robust on the first 50 samples (i.e., samples 0-49),
$\certify{F}^\id{1}$ is accurate and robust on samples 25-74, and $\certify{F}^\id{2}$ on samples 0-24 and 50-74.
Then, for each of the first 75 samples, two out of three constituents in the ensemble are accurate and robust. 
	Therefore, by Def.~\ref{def:voting_ensembler}, the ensemble $\certify{F}$ is accurate and robust on samples 0-74, and has $\cra$ equal to 0.75.
\end{example}

% !TEX root=./main.tex

\section{Related Work}
\label{sec:related}
%- Ensembling, boosting, and bagging in general \\
Ensembling is a well-known approach for improving the accuracy of models as long as the constituent models are suitably diverse \citep{dietterich2000}. 
%Aproaches like bagging and boosting are theoretically guaranteed to produce ensembles with a better accuracy than their constituent models.
%More recently, \emph{deep ensembles}, i.e. ensembles of deep neural networks, that rely only on the randomness of initialization and of stochastic gradient descent over non-convex objectives, have been empirically shown to improve model accuracy.
In recent years, with the growing focus on \emph{robust accuracy} as a metric of model quality, a number of ensembling techniques have been proposed for improving this metric.
Depending on whether an ensemble is query-access or not (i.e., does not or does require access to the internal details of the constituent models for prediction and certification), it can be classified as a \emph{white-box} or a \emph{black-box} ensemble. The modularity of black-box ensembles is attractive as the constituent classifiers can each be from a different model family (i.e., neural networks, decision trees, support vector machines, etc.) and each use a different mechanism for robustness certification. 
The constituents of white-box ensembles, on the other hand, tend to be from the same model family but this provides the benefit of tuning the ensembling strategy to the model family being used. %and certification mechanism being used.

\textbf{White-box ensembles.} 
%- Certifiably robust averaging ensembles \cite{yang2022on,zhang2019enhancing,liu2020enhancing,abernethy2021multiclass}
%- Empirically robust averaging ensembles \cite{pang2019improving, yang2020dverge, kariyappa2019improving} \\
Several works \citep{yang2022on,zhang2019enhancing,liu2020enhancing} present certifiable ensembles where the ensemble logits are calculated by averaging the corresponding logits of the constituent classifiers. 
Needing access to the logits of the constituent classifiers, and not just their predictions, is one aspect that makes these ensembles white-box. 
More importantly, the approaches used by these ensembles for local robustness certification are also in violation of our definition of query-access ensembles (Def.~\ref{def:qa_ensembler}). For instance, randomized smoothing~\citep{cohen19} is used in
\citep{yang2022on,liu2020enhancing} to certify the ensemble, which requires evaluating the constituent models on a large number of inputs for each prediction, and not just one input. 
%This violates the requirement that a query-access ensemble should evaluate the constituent models only on a single input.
Other approaches \citep{zhang2019enhancing} use interval bound propagation (IBP)~\citep{gowal2018effectiveness,zhang18efficient} to certify the ensemble. Calculating the interval bounds requires access to the architecture and weights of each of the constituent models, violating the requirements of a query-access ensemble.
A number of white-box ensembling techniques \citep{pang2019improving, yang2020dverge, kariyappa2019improving, Sen2020EMPIR, pmlr-v162-zhang22aj} only aim to improve \emph{empirical} robust accuracy, i.e., these ensembles do not provide a robustness certificate. 
As before, the ensemble logits are calculated by averaging the corresponding logits of the constituent models. 
These approaches differ from each other in the training interventions used to promote diversity in the constituent models.

\textbf{Black-box ensembles.}
%- Incorrect certifiably robust cascade (voting-based) ensembles \cite{wong2018scaling, blum2022boosting} \\
%- Empirically robust voting-based ensemble \cite{cheng2020voting}
Cascading ensembles \citep{wong2018scaling, blum2022boosting} are the most popular example of certifiably robust black-box ensembles. 
While \citet{wong2018scaling} empirically evaluate their cascading ensemble, the results of \citet{blum2022boosting} are purely theoretical. However, as we show in this work, the certification mechanism used by cascading ensembles is unsound.
\citet{cheng2020voting, Sen2020EMPIR} present a black-box voting ensemble but, unlike our voting ensemble, their ensemble does not provide robustness certificates. Nevertheless, they are able to show improvements in the empirical robust accuracy with the voting ensemble.
%Their empirical results, in combination with our hypothetical example showing that voting ensembles can improve $\cra$, give us hope that voting ensembles can also improve the $\cra$ in practice.

% !TEX root=./main.tex

\section{Conclusion}
\label{sec:conclusion}
In this paper, we showed that the local robustness certification mechanism used by cascading ensembles is unsound.
%, i.e., when a cascading ensemble is certified as locally robust at an input $x$, there can be inputs $x'$ in the $\epsilon$-ball centered at $x$, such that the cascade's prediction at $x'$ is different from $x$.
As a consequence, %the use of cascading ensembles in  scenarios that require local robustness guarantees is unsafe, and 
existing empirical results that report the certified robust accuracies (\cra) of cascading ensembles are not valid.
Guided by our theoretical results, we designed an attack algorithm against cascading ensembles and demonstrated that their unsoundness can be easily exploited in practice. In fact, the performance of the ensembles is markedly worse than their single best constituent model.
Finally, we presented an alternate black-box ensembling mechanism based on weighted voting that we prove to be sound, and, via a thought experiment, showed that voting ensembles can significantly improve the $\cra$ if the constituent models have the right balance between diversity and similarity.
%On repeating the experiments performed by \citet{wong2018scaling} using our provably sound voting ensemble (instead of the cascading ensemble), we find no improvement in $\cra$ over the constituent models.
%However, there is reason to be optimistic about voting ensembles; via a thought experiment, we show that voting ensembles can significantly improve the $\cra$ if the constituent models have the right balance between diversity and similarity.

\section*{Ethics Statement}
\label{sec:ethics}
Our work sheds light on existing vulnerabilities in state-of-the-art certifiably robust neural classifiers.
The presented attacks can be used by malicious entities to adversarially attack deployed cascading ensembles of certifiably robust models. 
However, by putting this knowledge out in the public domain and making practitioners aware of the existence of the problem, we hope that precautions can be taken to protect existing systems.
Moreover, it highlights the need to harden future systems against such attacks.

\section*{Reproducibility Statement}
\label{sec:repro} 

To examine our theoretical results, the proof of Theorem~\ref{thm:cascading_sound} directly follows the body of the theorem in Section~\ref{sec:cascade-unsoundness} while the proof of Theorem~\ref{thm:voting_sound} is delayed to Appendix~\ref{sec:proofs} and~\ref{sec:permutation_ensembler}, together with the proofs of other theorems that only appear in the appendix, i.e. Theorem~\ref{thm:cascading_qa}, ~\ref{thm:voting_qa} (Appendix~\ref{sec:proofs}) and Theorem~\ref{thm:permutation} (Appendix~\ref{sec:permutation_ensembler}). All the datasets used in our work are publicly available with links in their corresponding reference.  Our experimental code is uploaded in the supplementary material with a detailed README file and weights of models to reproduce the results in Table~\ref{tab:ensemble_attack}, \ref{tab:the efficiency of the attack},~\ref{tab:l_inf_ensembles},~\ref{tab:l_2_ensembles}, ~\ref{tab:l_inf_new}, and~\ref{tab:l_2_new}. Moreover, hyper-parameters used in these table are also documented in Appendix~\ref{sec:hparams of tabel 1} and~\ref{sec:learn_weights}. The hardware information used in all experiments is reported in Section~\ref{sec:eval}.

\bibliography{references}
\bibliographystyle{iclr2023_conference}

\newpage
\appendix
%\begin{center}
%	\textbf{\Large Appendix: On the Perils of Cascading Robust Classifiers}
%\end{center}

\section{Proofs}
\label{sec:proofs}

\noindent
\textbf{Theorem~\ref{thm:voting_sound}.}
\textit{ The weighted voting ensembler $\ensembler_\vote^w$ is soundness-preserving.}

\begin{proof}
Let $\certify{F}^\id{0}, ..., \certify{F}^\id{N-1}$ be $N$ certifiable classifiers, which we assume are sound.
Let $\certify{F} := \ensembler_\vote^w(\certify{F}^\id{0}, ..., \certify{F}^\id{N-1})$; i.e., $\certify{F}$ is given by Definition~\ref{def:voting_ensembler}.

	Assume for the sake of contradiction $\exists x, x'$ s.t. $||x-x'|| \leq \epsilon$, $\certify{F}(x) = (j_1, 1)$, and $\lab{\certify{F}}(x') = j_2$ where $j_2 \neq j_1$.

Since $\certify{F}(x) = (j_1, 1)$, by Definition~\ref{def:voting_ensembler}., $\forall j \neq j_1$, $v_x^w(j_1, 1) > v_x^w(*, 0) + v_x^w(j, 1)$, and thus, in particular, Equation~\ref{eq:soundness_proof:ineq_1} holds.
\begin{equation}
\label{eq:soundness_proof:ineq_1}
v_x^w(j_1, 1) > v_x^w(*, 0) + v_x^w(j_2, 1)
\end{equation}

Consider the votes on $x'$. 
The models that contribute to $v_x^w(j_1, 1)$ are all locally robust\footnote{The models are known to be locally robust because they are \emph{sound} and their output matches $(*, 1)$.} at $x$, so each of these models must output the label $j_1$ on $x'$, which is at distance no greater than $\epsilon$ from $x$; thus Equation~\ref{eq:soundness_proof:ineq_2} holds.
\begin{equation}
\label{eq:soundness_proof:ineq_2}
v_{x'}^w(j_1) \geq v_x^w(j_1, 1)
\end{equation}

Conversely, only points that are non-robust at $x$ can change labels on $x'$, thus we obtain Equation~\ref{eq:soundness_proof:ineq_3}.
\begin{equation}
\label{eq:soundness_proof:ineq_3}
v_{x'}^w(j_2) \leq v_x^w(*,0) + v_x^w(j_2, 1)
\end{equation}

Putting things together we have
\begin{align*}
v_{x'}^w(j_1) &\geq v_x^w(j_1, 1) &\text{by (\ref{eq:soundness_proof:ineq_2})} \\
            &> v_x^w(*,0) + v_x^w(j_2, 1) &\text{by (\ref{eq:soundness_proof:ineq_1})} \\
            &\geq v_{x'}^w(j_2) &\text{by (\ref{eq:soundness_proof:ineq_3})}
\end{align*}

	Thus, since $v_{x'}^w(j_1) > v_{x'}^w(j_2)$, $\lab{\certify{F}}(x')$ cannot be $j_2$ \lightning.
\end{proof}

\noindent
\textbf{Example~\ref{ex:voting_improve}.}
%\textit{ Weighted voting ensembler $\ensembler_\vote^w$ can improve certified robust accuracy.}
{\itshape
We want to show that,
	$\exists \certify{F}^\id{0}, \certify{F}^\id{1}, \ldots, \certify{F}^\id{N-1} \in \mathbb{\certify{F}}, w \in [0,1]^N$ such that for $\certify{F} := \ensembler_\cas^w(\certify{F}^\id{0}, \certify{F}^\id{1}, \ldots, \certify{F}^\id{N-1}),$

	$$\exists S_k \subseteq \mathbb{R}^d \times \ClassSet.~\forall i \in \{0,\ldots,N-1\}.~\cra(\certify{F},S_k) > \cra(\certify{F}^\id{i},S_k)$$

Consider a weighted voting ensemble $\certify{F}$ constituted of certifiable classifiers $\certify{F}^\id{0}$, $\certify{F}^\id{1}$, and 
	$\certify{F}^\id{2}$ with weights $w = (\frac{1}{3},\frac{1}{3},\frac{1}{3})$, i.e., $\certify{F}$ is a uniform voting ensemble.

Suppose $k=100$, i.e., $S_k$ is a dataset with 100 samples. Moreover, lets say that $\cra(\certify{F}^\id{0},S_k) = \cra(\certify{F}^\id{1},S_k) = \cra(\certify{F}^\id{2},S_k) = 0.5$.
	Also, suppose that the samples in $S_k$ are arranged in a fixed sequence $S_k = (x_0,y_0), \ldots, (x_{k-1},y_{k-1})$ such that,
	\begin{align}
		\forall i \in [0,49].~\certify{F}^\id{0}(x_i) = (y_i,1) \label{eq:vot_im1}\\
		\forall i \in [25,74].~\certify{F}^\id{1}(x_i) = (y_i,1) \label{eq:vot_im2}\\
		\forall i \in [0,24] \cup [50,74].~\certify{F}^\id{2}(x_i) = (y_i,1) \label{eq:vot_im3}
	\end{align}
	where $[i,j]$ is the set of integers from $i$ to $j$, $i$ and $j$ included. 
	\ref{eq:vot_im1}, \ref{eq:vot_im2}, and \ref{eq:vot_im3} are consistent with the fact that certified robust accuracy of each model is 0.5.
	
	From \ref{eq:vot_im1}, \ref{eq:vot_im2}, \ref{eq:vot_im3}, and Definition~\ref{def:voting_ensembler},
	\begin{align}	
		\forall i \in [0,74].~\lab{\certify{F}}(x_i) = \argmax_j\Big\{v_{x_i}^w(j)\Big\} = y_i \label{eq:vot_im4}\\
		\forall i \in [0,74].~\forall j \neq y_i ~.~ v_{x_i}^w(y_i, 1) > v_{x_i}^w(*, 0) + v_{x_i}^w(j, 1) \label{eq:vot_im5}
	\end{align}

	From \ref{eq:vot_im5} and Definition~\ref{def:voting_ensembler},
	\begin{align}
		\forall i \in [0,74].  \cer{\certify{F}}(x_i) = 1 \label{eq:vot_im6}
	\end{align}

	From \ref{eq:vot_im4}, \ref{eq:vot_im6}, and Definition~\ref{def:cra},
	\begin{align}
		\cra(\certify{F},S_k) = 0.75 \label{eq:vot_im7}
	\end{align}
}

\begin{theorem}
\label{thm:cascading_qa}
The cascading ensembler $\ensembler_\cas$ is query-access.
\end{theorem}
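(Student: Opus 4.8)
The plan is to produce, for arbitrary fixed certifiable classifiers $\certify{F}^\id{0},\ldots,\certify{F}^\id{N-1}$, a single function $G \in \mathbb{G} = (\ClassSet \times \{\notrobust,\robust\})^N \rightarrow \ClassSet \times \{\notrobust,\robust\}$ that recovers the cascade's output at every input purely from the $N$ constituent outputs $\certify{F}^\id{i}(x)$, as required by Def.~\ref{def:qa_ensembler}. The crucial point is that Def.~\ref{def:cascading_ensembler} already expresses $\ensembler_\cas(\certify{F}^\id{0},\ldots,\certify{F}^\id{N-1})(x)$ solely in terms of the pairs $\certify{F}^\id{i}(x)$ and their certificate coordinates $\cer{\certify{F}^\id{i}}(x)$, so no access to the internals of the constituents, nor to evaluations at points other than $x$, is ever used; formalizing this observation as a $G$ of the correct type is the whole task.

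Second, I would give $G$ explicitly on an abstract tuple. Writing a generic argument as $\big((l_0,c_0),\ldots,(l_{N-1},c_{N-1})\big)$ with $l_i \in \ClassSet$ and $c_i \in \{\notrobust,\robust\}$, define
\[
G\big((l_0,c_0),\ldots,(l_{N-1},c_{N-1})\big) :=
\begin{cases}
(l_j,c_j) & \text{if } j = \min\{\, i : c_i = \robust \,\} \text{ exists},\\
(l_{N-1},c_{N-1}) & \text{otherwise.}
\end{cases}
\]
This is a well-defined element of $\mathbb{G}$: when the set $\{ i : c_i = \robust\}$ is nonempty its minimum is unique, and in either branch the value returned is one of the $N$ input pairs, so $G$ reads nothing beyond its own arguments.

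Third, I would verify the defining equation of Def.~\ref{def:qa_ensembler} by substituting $(l_i,c_i) = \certify{F}^\id{i}(x)$, so that $c_i = \cer{\certify{F}^\id{i}}(x)$ and $l_i = \lab{\certify{F}^\id{i}}(x)$. The key observation linking the two definitions is that the guard $\cond(j)=1$ in Def.~\ref{def:cascading_ensembler}---namely $\cer{\certify{F}^\id{j}}(x)=\robust$ together with $\cer{\certify{F}^\id{i}}(x)=\notrobust$ for all $i<j$---holds exactly when $j$ is the least index whose certificate is $\robust$. Hence at most one $j$ satisfies $\cond(j)=1$, and it coincides with the index $\min\{ i : c_i = \robust\}$ used by $G$. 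Matching branches: if some constituent is certified at $x$ then both Def.~\ref{def:cascading_ensembler} and $G$ return $\certify{F}^\id{j}(x)=(l_j,c_j)$ for this common $j$; if none is certified then $\cond(j)=0$ for every $j$ and both fall through to $\certify{F}^\id{N-1}(x)=(l_{N-1},c_{N-1})$. Thus $\ensembler_\cas(\certify{F}^\id{0},\ldots,\certify{F}^\id{N-1})(x) = G(\certify{F}^\id{0}(x),\ldots,\certify{F}^\id{N-1}(x))$ for all $x$.

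There is no genuine obstacle here; the statement is essentially a type-checking exercise, and the only points requiring a line of care are (i) confirming uniqueness of the minimizing index so that the first branch of $G$ is unambiguous, and (ii) checking that this min-index computation depends only on the certificate coordinates $c_i$, which are part of the arguments of $G$, so that $G$ truly lies in $\mathbb{G}$ rather than covertly depending on $x$. With these in hand, the cascade is query-access.
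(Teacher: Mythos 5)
Your proposal is correct and takes essentially the same approach as the paper: both construct an explicit $G \in \mathbb{G}$ that selects the first input pair whose certificate coordinate is $\robust$ (falling back to the last pair otherwise) and then verify that this matches Def.~\ref{def:cascading_ensembler} branch by branch. Your min-index formulation is just a restatement of the paper's guard $\cond(j)$, and your added remarks on uniqueness of the minimizing index are a harmless extra check.
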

\begin{proof}
	Let $\certify{F} := \ensembler_\cas(\certify{F}^\id{0}, \certify{F}^\id{1}, \ldots, \certify{F}^\id{N-1})$.
	Let $g^\id{0}, g^\id{1}, \ldots, g^\id{N-1} \in \ClassSet \times \{\notrobust, \robust\}$. We use $g_2^\id{j}$ to refer to the second element in the pair $g^\id{j}$.
	Define $G$ as follows,
	\begin{align*}
	    G(g^\id{0}, g^\id{1}, \ldots, g^\id{N-1}) := 
        \begin{cases} 
		g^\id{j} & \text{if } \exists j \leq N-1.~\cond(j)=1 \\
		g^\id{{N-1}} & \text{otherwise}
        \end{cases}
        \end{align*}
	where $\cond(j) := 1 \text{ if } (g_2^\id{j} = \robust) \wedge (\forall i < j, g_2^\id{i} = \notrobust)$ and 0 otherwise.

	Then, by Def.~\ref{def:cascading_ensembler}, $\certify{F}(x) = G(\certify{F}^\id{0}(x), \certify{F}^\id{1}(x), \ldots, \certify{F}^\id{N-1}(x))$.
	Then, by Def.~\ref{def:qa_ensembler}, cascading ensembler is query-access.
\end{proof}

\begin{theorem}
\label{thm:voting_qa}
The weighted voting ensembler $\ensembler_\vote^w$ is query-access.
\end{theorem}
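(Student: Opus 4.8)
The plan is to mirror the proof of Theorem~\ref{thm:cascading_qa} by exhibiting an explicit witness $G \in \mathbb{G}$ for query-access. The key observation is that the vote function $v_x^w(j,c)$ from Equation~\ref{eq:vote_function} depends on $x$ \emph{only} through the tuple of constituent outputs $\big(\certify{F}^\id{0}(x), \ldots, \certify{F}^\id{N-1}(x)\big)$, since it is nothing more than a weighted sum of the indicators $\indicator[\certify{F}^\id{i}(x) = (j,c)]$. Because both $\lab{\certify{F}}(x)$ and $\cer{\certify{F}}(x)$ in Definition~\ref{def:voting_ensembler} are computed entirely from $v_x^w$, the whole ensemble output factors through this tuple, which is exactly what query-access demands.

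Concretely, I would fix the weight vector $w$ (a fixed parameter of the ensembler, not part of the black-box interface) and let $g^\id{0}, \ldots, g^\id{N-1} \in \ClassSet \times \{\notrobust, \robust\}$ be abstract output pairs. For each label $j$ and certificate $c$ define the count $v^w(j,c) := \sum_{i=0}^{N-1} w_i \cdot \indicator[g^\id{i} = (j,c)]$, together with the shorthands $v^w(j) := v^w(j,\notrobust) + v^w(j,\robust)$ and $v^w(*,\notrobust) := \sum_{j \in \ClassSet} v^w(j,\notrobust)$. Then set $G(g^\id{0}, \ldots, g^\id{N-1}) := (\ell, c)$, where $\ell := \argmax_j v^w(j)$ (breaking ties by lowest index, as in Definition~\ref{def:voting_ensembler}), and $c := \robust$ if $\forall j \neq \ell ~.~ v^w(\ell, \robust) > v^w(*,\notrobust) + v^w(j,\robust)$ and $c := \notrobust$ otherwise. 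This $G$ lies in $\mathbb{G}$ because it is a function purely of the output tuple.

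Finally, I would verify that $\ensembler_\vote^w(\certify{F}^\id{0}, \ldots, \certify{F}^\id{N-1})(x) = G\big(\certify{F}^\id{0}(x), \ldots, \certify{F}^\id{N-1}(x)\big)$ for every $x \in \mathbb{R}^d$ by substituting $g^\id{i} = \certify{F}^\id{i}(x)$: under this substitution $v^w(j,c)$ becomes exactly $v_x^w(j,c)$, so $\ell$ matches $\lab{\certify{F}}(x)$ and $c$ matches $\cer{\certify{F}}(x)$ by Definition~\ref{def:voting_ensembler}; invoking Definition~\ref{def:qa_ensembler} then closes the argument. I expect no genuine obstacle here; the only point requiring care is recognizing that the weights $w$ are constants baked into the ensembler rather than part of what $G$ receives, so $G$ is allowed to depend on them while remaining a legitimate element of $\mathbb{G}$, which by definition takes only the constituent outputs as arguments.
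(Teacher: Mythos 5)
Your proposal is correct and matches the paper's own proof essentially verbatim: both construct the witness $G$ by recomputing the weighted vote counts $v^w(j,c)$ directly from the abstract output tuple $(g^\id{0},\ldots,g^\id{N-1})$ and then applying the label/certificate rules of Definition~\ref{def:voting_ensembler}. Your added remark that the weights $w$ are constants baked into $G$ rather than part of its arguments is a correct and worthwhile clarification, but the argument is otherwise the same.
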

\begin{proof}
	Let $\certify{F} := \ensembler_\vote^w(\certify{F}^\id{0}, \certify{F}^\id{1}, \ldots, \certify{F}^\id{N-1})$.
	Let $g^\id{0}, g^\id{1}, \ldots, g^\id{N-1} \in \ClassSet \times \{\notrobust, \robust\}$. We use $\bar{g}$ to refer to the set $\{g^\id{0}, g^\id{1}, \ldots, g^\id{N-1}\}$. 
	Define $G$ as follows,
    $$
	 G(g^\id{0}, g^\id{1}, \ldots, g^\id{N-1}) := 
	\big(G_1(g^\id{0}, g^\id{1}, \ldots, g^\id{N-1}), G_2(g^\id{0}, g^\id{1}, \ldots, g^\id{N-1}))
    $$
	where
    $$
	\hat{j} = G_1(g^\id{0}, g^\id{1}, \ldots, g^\id{N-1}) := \argmax_j\Big\{v_{\bar{g}}^w(j,*)\Big\},
    $$
    $$
	G_2(g^\id{0}, g^\id{1}, \ldots, g^\id{N-1}) := \begin{cases}
		1 & \text{if $\forall j \neq \hat{j}$ ~.~ $v_{\bar{g}}^w(\hat{j}, 1) > v_{\bar{g}}^w(*, 0) + v_{\bar{g}}^w(j, 1)$} \\
    0 & \text{otherwise}
    \end{cases}
    $$
	and 
    $$
	v_{\bar{g}}^w(j, c) = \sum_{i = 0}^{N-1} w_i * \indicator\left[g^\id{i} = (j, c)\right]
    $$
	Then, by Def.~\ref{def:voting_ensembler}, $\certify{F}(x) = G(\certify{F}^\id{0}(x), \certify{F}^\id{1}(x), \ldots, \certify{F}^\id{N-1}(x))$.
	Then, by Def.~\ref{def:qa_ensembler}, weighted voting ensembler is query-access.
\end{proof}

\section{Hyper-parameters of Table~\ref{tab:ensemble_attack}}\label{sec:hparams of tabel 1}
In Table~\ref{tab:attack_hparams}, we report hyper-parameters used to run CasA to reach the statistics reported in Table~\ref{tab:ensemble_attack}. Notice that if a normalization is `$\mu=$[0.485, 0.456, 0.406], $\sigma=$0.225', we divide the $\epsilon$ and step size by $\sigma$ during the experiment. We use SGD as the optimizer for all experiments. 

\begin{table}[!t]
    \caption{Hyper-parameters Used for CasA in Table~\ref{tab:ensemble_attack}}
    \label{tab:attack_hparams}
    \centering
    \resizebox{0.8\textwidth}{!}{
    \begin{tabular}{ c c c  c c c}
    \toprule
    \toprule
    Dataset & Norm & $\epsilon$ & Normalization & Max Steps & Step Size  \\
    \midrule
	MNIST & $\ell_\infty$ & 0.1 & [0,1] & 100 & 0.004 \\
	MNIST & $\ell_\infty$ & 0.3 & [0,1] & 100 & 0.012 \\
	MNIST & $\ell_2$ & 1.58 & [0,1] & 100 & 0.03 \\
    \midrule
	CIFAR10 & $\ell_\infty$ & 2/255 & $\mu=$[0.485, 0.456, 0.406], $\sigma=$0.225 & 100 & 0.0003 \\
	CIFAR10 & $\ell_\infty$ & 8/255 & $\mu=$[0.485, 0.456, 0.406], $\sigma=$0.225 & 100 & 0.00124 \\
	CIFAR10 & $\ell_2$ & 36/255 & $\mu=$[0.485, 0.456, 0.406], $\sigma=$0.225 & 100 & 0.0003 \\
    \bottomrule\\
    \end{tabular}}
\end{table}

\section{Weighted Voting Ensemble: Learning Weights}
\label{sec:learn_weights}

The weights $w$ in $\ensembler_\vote^w$ determines the importance of each constituent classifier in the ensemble. 
%Uniform Voting Ensembler (Def.~\ref{def:unif_voting_ensembler}) provides a way to determine $w$ when each model is considered equally important. 
Given a set of $k$ labeled inputs, $S_k$ (e.g. the training set), we would like to learn the optimal weights $w$ that maximize the ensembler's $\cra$ (Def.~\ref{def:cra}) over $S_k$. When $S_k$ resembles the true distribution of the test points, the learned $w$ is expected to be close to the optimal weights that maximizes the $\cra$ of the test set. Weight optimization over $S_k$ naturally takes the following form.
\begin{align}
	\max_{w \in [0, 1]^N} ~~\frac{1}{k}\sum_{(x_i,y_i) \in S_k}\indicator\left[\ensembler_\vote^w(\certify{F}^\id{0}, ..., \certify{F}^\id{N-1})(x_i)=(y_i,\robust) \right]\label{eq:learn-weigths-01-loss}
\end{align}
For the indicator to output 1, it is required that the margin of votes be greater than 0, i.e. $\Delta^w_{x_i}(y_i) := v^w_{x_i}(y_i, 1) - v^w_{x_i}(*, 0) -\max_{j \neq y_i} \{v^w_{x_i}(j, 1)\} >0$. Namely, the votes for the class $y_i$, i.e. $v^w_{x_i}(y_i, 1)$, must be greater than the votes for all other classes i.e. $\max_{j \neq y_i} \{v^w_{x_i}(j, 1)\}$ plus the votes for non-robust predictions $v^w_{x_i}(*, 0)$ as discussed in Def.~\ref{def:voting_ensembler}. 
%We denote the margin by $\Delta^w_{x_i}(y_i) = v^w_{x_i}(y_i, 1) - v^w_{x_i}(*, 0) -\max_{j \neq y_i} \{v^w_{x_i}(j, 1)\}$. 
Eq.(\ref{eq:learn-weigths-01-loss}) then becomes:
\begin{align}
    \max_{w \in [0, 1]^N} ~~\frac{1}{k}\sum_{(x_i,y_i) \in S_k}\indicator\left[\Delta^w_{x_i}(y_i) >0 \right] \label{eq:learn-weigths-hard-margin}
\end{align}
%\paragraph{A Surrogate Objective.} The bottleneck in directly optimizing Eq.(\ref{eq:learn-weigths-hard-margin}) is that the indicator function is not differentiable. We now consider constructing a surrogate version of the indicator function by replacing it with a differentiable and monotonically increasing function $s$. 
The indicator function is not differentiable so we replace it with a differentiable and monotonically increasing function $s$, which leads to Eq.~\ref{eq:learn-weigths-soft-margin}.
\begin{align}
    \max_{w \in [0, 1]^N} ~~\frac{1}{k}\sum_{(x_i,y_i) \in S_k}s(\Delta^w_{x_i}(y_i))  \label{eq:learn-weigths-soft-margin}
\end{align}
%In this paper, we choose $s$ as the sigmoid function $\sigma_{t}$ where $t$ is the temperature only for negative inputs (i.e. we only divide the input with $t$ if it is negative). The use of $\sigma_{t}$ is based on the following reasons: 1) $\sigma_{t}$ is non-negative so margins with different signs will not cancel out each other; 2) $\sigma_{t}$ is upper-bounded by 1 and flattens for large inputs so that the objective will not bias on maximizing the margins for just a few points; and 3) a temperature term $t$ can be used to deal with the vanishing gradients of the sigmoid. However, notice that the flattened regions are actually desired in case of large positive margins as mentioned in (2), so we will only adjust the temperature when the margins $\Delta^w_{x_i}(y_i)$ are negative. Otherwise, the temperature is always set to 1. Finally, we present the optimization objective we solve to obtain an optimal $w^*$ that maximizes the $\cra$ of the ensembler $\ensembler_\vote^w$ on a given set of points $S_k$:
In this paper, we choose $s$ to be the sigmoid function $\sigma_{t}$ where $t$ is the temperature only for negative inputs, i.e., $\sigma_{t}(x) := \sigma(x) \text{ if } x > 0 \text{ and } \sigma(x/t) \text{ otherwise}$, where $\sigma$ is the standard sigmoid function. 
Sigmoid is non-negative so margins with opposite signs do not cancel, and it also avoids biasing training towards producing larger margins on a small number of points.
Indeed, vanishing gradients are useful on points around large positive margins, so the temperature is only applied on negative inputs.
This leads us to Eq.~\ref{eq:learn-weigths-sigmoid-margin}, the optimization objective we solve for optimal weights $w^*$.
\begin{align}
    w^* = \argmax_{w \in [0, 1]^N} ~~\frac{1}{k}\sum_{(x_i,y_i) \in S_k}\sigma_t(\Delta^w_{x_i}(y_i))\label{eq:learn-weigths-sigmoid-margin}
\end{align}

\section{Weighted Voting Ensemble: Empirical Results}
\label{sec:more_eval}

The goal of these experiments is to evaluate the efficacy of our sound voting ensemble.
For our experiments, we use the pre-trained ensemble constituent models made available by \citet{wong2018scaling} to construct three kinds of ensembles, namely, cascading ensembles, uniform voting ensembles, and weighted voting ensembles.
The weights for the weighted voting ensemble are learned in the manner described in Appendix~\ref{sec:learn_weights}. 
We report certified robust accuracy ($\cra$) and standard accuracy ($\acc$) for each ensemble as well as for the best constituent model.
Note that all these ensembles are query-access but only the uniform voting and weighted voting ensembles are soundness-preserving. 
Consequently, the $\cra$ reported for the cascading ensemble grossly overestimates the actual $\cra$ as demonstrated by our attack results.
%By comparing the $\cra$ of the voting ensembles with the cascading ensembles, we are able to get a sense of the extent to which the unsoundness of cascading distorts the $\cra$.
We always set the temperature to 1e5 and learning rate to 1e-2 when learning the weights as described in Appendix~\ref{sec:learn_weights}.

\begin{table}[!t]
    \caption{Results on models pre-trained by \citet{wong2018scaling} for $\ell_\infty$ robustness.}
    \label{tab:l_inf_ensembles}
    \centering
    \resizebox{\textwidth}{!}{
    \begin{tabular}{ c c c  c c | c c | c c | c c }
    \toprule
    \toprule
    {} & {} & {} & \multicolumn{2}{c}{Single Model} & \multicolumn{2}{|c|}{Cascading} & \multicolumn{2}{|c}{Uniform Voting} & \multicolumn{2}{|c}{Weighted Voting} \\
	    Dataset & Model & $\epsilon$ & \cra(\%) & \acc(\%) & \cra(\%) & \acc(\%) & \cra(\%) & \acc(\%) & \cra(\%) & \acc(\%) \\
    \midrule
	    MNIST & Small, Exact%\tablefootnote{Uses \emph{exact} mode of dual networks based local robustness certification} 
	    & 0.1 & 95.54 & 98.96 & 96.33 & 96.62 & 0.01 & 61.68 & 95.54 & 61.68 \\
    MNIST & Small & 0.1 & 94.94 & 98.79 & 96.07 & 96.24 & 9.29 & 65.85 & 94.94 & 98.79 \\
    MNIST & Large & 0.1 & 95.55 & 98.81 & 96.27 & 96.42 & 10.12 & 63.89 & 95.55 & 98.81 \\
    \midrule
    MNIST & Small & 0.3 & 56.21 & 85.23 & 65.41 & 65.80 & 11.48 & 56.46 & 56.21 & 85.23 \\
    MNIST & Large & 0.3 & 58.02 & 88.84 & 65.50 & 65.50 & 26.95 & 65.97 & 58.02 & 88.84 \\
    \midrule
    CIFAR10 & Small & 2/255 & 46.43 & 60.86 & 56.65 & 56.65 & 18.58 & 40.88 & 46.43 & 60.86 \\
    CIFAR10 & Large & 2/255 & 52.65 & 67.70 & 64.88 & 65.14 & 18.07 & 48.92 & 52.65 & 67.70 \\
    \midrule
    CIFAR10 & Small & 8/255 & 20.58 & 27.60 & 28.32 & 28.32 & 10.78 & 24.11 & 19.00 & 23.78 \\
    CIFAR10 & Large & 8/255 & 16.04 & 19.01 & 20.83 & 20.83 & 5.18 & 21.01 & 16.04 & 19.01 \\
    \bottomrule\\
    \end{tabular}}
\end{table}

% ##################################################################################
\begin{table}[!t]
    \caption{Results on models pre-trained by \citet{wong2018scaling} for $\ell_2$ robustness.}
    \label{tab:l_2_ensembles}
    \centering
    \resizebox{\textwidth}{!}{
    \begin{tabular}{ c c c  c c | c c | c c | c c }
    \toprule
    \toprule
    {} & {} & {} & \multicolumn{2}{c|}{Single Model} & \multicolumn{2}{|c|}{Cascading} & \multicolumn{2}{|c}{Uniform Voting} & \multicolumn{2}{|c}{Weighted Voting}\\
	    Dataset & Model & $\epsilon$ & \cra(\%) & \acc(\%)  & \cra(\%)  & \acc(\%)  & \cra(\%)  & \acc(\%)  & \cra(\%)  & \acc(\%)  \\
    \midrule
    MNIST & Small, Exact & 1.58 & 43.52 & 88.14 & 75.58 & 80.43 & 06.42 & 74.25 & 43.52 & 88.14 \\
    MNIST & Small & 1.58 & 43.34 & 87.73 & 74.66 & 79.07 & 06.74 & 74.24 & 43.34 & 87.73 \\
    MNIST & Large & 1.58 & 43.96 & 88.39 & 74.50 & 74.99 & 06.35 & 65.99 & 43.96 & 88.39 \\
    \midrule
    CIFAR10 & Small & 36/255 & 46.05 & 54.39 & 49.89 & 51.37 & 11.47 & 38.74 & 46.05 & 54.39 \\
    CIFAR10 & Large & 36/255 & 50.26 & 60.14 & 58.72 & 58.76 & 10.56 & 39.74 & 50.26 & 60.14 \\
    CIFAR10 & Resnet & 36/255 & 51.65 & 60.7 & 58.65 & 58.69 & 22.74 & 46.71 & 51.65 & 60.7  \\
    \bottomrule\\
    \end{tabular}}
\end{table}

Table ~\ref{tab:l_inf_ensembles} shows the results for $\ell_\infty$ robustness. 
Each row in the table represents a specific combination of dataset (MNIST or CIFAR-10), architecture (Small or Large convolutional networks or Resnet), and $\epsilon$ value used for local robustness certification. 
%The structure of this table is the same as Table 2 in~\citep{wong2018scaling} except for missing rows corresponding to ResNet architecture. 
%The table shows results for a single model, cascading ensemble, uniform voting ensemble, and weighted voting ensemble. 
Table~\ref{tab:l_2_ensembles}  shows the results for $\ell_2$ robustness using constituent models pre-trained by \citet{wong2018scaling}. 
%The structure of this table is the same as Table 4 in~\citep{wong2018scaling} except for the missing rows corresponding to Resnet architecture. We are unable to evaluate the Resnet models as they require more memory than available on our GPU (24GB of RAM). 

\textbf{Summary of Results.}
We see from Tables~\ref{tab:l_inf_ensembles} and~\ref{tab:l_2_ensembles} that while the cascading ensemble appears to improve upon the $\cra$ of the single best model in the ensemble, these numbers are misleading due to the unsoundness of the certification mechanism. The $\cra$ for the uniform voting and weighted ensembles are consistently lower than that reported by the cascading ensemble, and in many cases, significantly so. 
Uniform voting ensembles stand-out for their low $\cra$ but there is a simple explanation for these results. 
The constituent models are trained by \citet{wong2018scaling} in a cascading manner, i.e., later constituent models are trained on only those points that cannot be certified by the previous models. 
This strategy causes the subset of inputs labeled correct and certifiably robust by each constituent model to have minimal overlap. 
%However, as highlighted by the thought experiment in Section~\ref{sec:voting_ensemble}, voting ensembles need these input subsets to strike the right balance between diversity and similarity for improving the $\cra$ . 
However, voting ensembles need these input subsets to strike the right balance between diversity and overlap for improving the $\cra$ . 

Another interesting observation is that, in most cases, the $\cra$ of the weighted voting ensemble and the single best constituent model are the same.
This is again a consequence of the cascaded manner in which the constituent models are trained. The first model in the cascade typically vastly outperforms the subsequent models.
Moreover, as already mentioned, the constituent models have almost no overlap in the input regions where they perform well, and their presence only ends up harming the performance of the voting ensemble. 
As a consequence, the optimal normalized weights, learned by solving the optimization problem described in Appendix~\ref{sec:learn_weights}, typically assign all the mass to the first model.
The detailed weights for each of the weighted voting ensemble are given in Tables~\ref{tab:l_inf_ensembles_weights},~\ref{tab:l_2_ensembles_weights},~\ref{tab:l_inf_new_weights}, and~\ref{tab:l_2_new_weights}.

These results suggest two takeaway messages. 
First, the cascaded strategy of \citet{wong2018scaling} for training constituent models is in conflict with the requirement that constituent models overlap in their behavior for voting ensembles to be effective.
%This, along with our thought experiment, gives up hope that if the constituent models are suitably trained, voting ensembles can improve the $\cra$. We leave this exploration for future work.
This gives up hope that if the constituent models are suitably trained, voting ensembles can improve the $\cra$. We leave this exploration for future work.
Second, even if the constituent models do not show the right balance between diversity and similarity, our weight learning procedure ensures that the performance of the weighted voting ensemble is no worse than the single best constituent model. 
Ideally, we would like the weights to be equally distributed since this conveys that every constituent in the ensemble has something to contribute. 
But, in the worst case, the weights play the role of a model selection procedure, assigning zero weights to constituent models that do not contribute to the ensemble.

\textbf{Non-Sequential Training.}
We conduct another set of experiments where instead of using the constituent models pre-trained by \citet{wong2018scaling}, we train them ourselves in a non-sequential manner.
That is, each constituent model is trained on the entire train dataset, and each constituent only differs due to the randomness of initialization and of stochastic gradient descent during training.
Besides this difference, the code, architecture, hyperparameters, and data used for training are the same as that used by \citet{wong2018scaling}. 
For every combination of dataset, architecture, and $\epsilon$ value, we train three constituent models, and use them to construct cascading, uniform voting, and weighted voting ensembles.

\begin{table}[!t]
    \caption{Results on non-sequentially trained models for $\ell_\infty$ robustness.}
    \label{tab:l_inf_new}
    \centering
    \resizebox{\textwidth}{!}{
    \begin{tabular}{ c c c  c c | c c | c c | c c }
    \toprule
    \toprule
    {} & {} & {} & \multicolumn{2}{c}{Single Model} & \multicolumn{2}{|c|}{Cascading} & \multicolumn{2}{|c}{Uniform Voting} & \multicolumn{2}{|c}{Weighted Voting} \\
	    Dataset & Model & $\epsilon$ & \cra(\%) & \acc(\%) & \cra(\%) & \acc(\%) & \cra(\%) & \acc(\%) & \cra(\%) & \acc(\%) \\
    \midrule
    MNIST & Small, Exact & 0.1 & 95.61 & 99.02 & 97.07 & 99.14 & 95.56 & 99.16 & 95.54 & 98.96 \\
    MNIST & Small & 0.1 & 94.94 & 98.79 & 96.25 & 98.69 & 94.46 & 98.78 & 94.94 & 98.79 \\
    %MNIST & Large & 0.1 & 95.96 & 99.04 & 97.21 & 98.97 & 95.88 & 99.01 & 95.96 & 99.04 \\
    \midrule
    MNIST & Small & 0.3 & 56.21 & 85.23 & 66.09 & 84.77 & 55.24 & 85.02 & 56.21 & 85.23 \\
    %MNIST & Large & 0.3 & 61.21 & 89.08 & 67.00 & 88.84 & 58.16 & 88.30 & 61.21 & 89.08 \\
    \midrule
    CIFAR10 & Small & 2/255 & 46.43 & 60.86 & 55.49 & 62.06 & 43.48 & 62.79 & 42.35 & 61.09 \\
    %CIFAR10 & Large & 2/255 & 52.65 & 67.7 & 58.38 & 65.37 & 36.46 & 63.58 & 32.13 & 56.47 \\
    % CIFAR10 & Resnet & 2/255  \\
    \midrule
    CIFAR10 & Small & 8/255 & 21.04 & 28.29 & 25.11 & 27.73 & 20.44 & 28.32 & 21.04 & 28.29 \\
    %CIFAR10 & Large & 8/255 & 16.69 & 19.32 & 18.67 & 19.02 & 15.96 & 19.21 & 16.69 & 19.32 \\
    % CIFAR10 & Resnet & 8/255  \\
    \bottomrule\\
    \end{tabular}}
\end{table}

\begin{table}[!t]
    \caption{Results on non-sequentially trained models for $\ell_2$ robustness.}
    \label{tab:l_2_new}
    \centering
    \resizebox{\textwidth}{!}{
    \begin{tabular}{ c c c  c c | c c | c c | c c }
    \toprule
    \toprule
    {} & {} & {} & \multicolumn{2}{c|}{Single Model} & \multicolumn{2}{|c|}{Cascading} & \multicolumn{2}{|c}{Uniform Voting} & \multicolumn{2}{|c}{Weighted Voting}\\
	    Dataset & Model & $\epsilon$ & \cra(\%) & \acc(\%)  & \cra(\%)  & \acc(\%)  & \cra(\%)  & \acc(\%)  & \cra(\%)  & \acc(\%)  \\
    \midrule
    MNIST & Small, Exact & 1.58 & 43.52 & 88.14 & 47.42 & 87.78 & 42.71 & 88.19 & 43.52 & 88.14 \\
    MNIST & Small & 1.58 & 43.34 & 87.73 & 47.40 & 87.50 & 42.87 & 87.99 & 43.34 & 87.73 \\
    %MNIST & Large & 1.58 & 43.96 & 88.39 & 46.89 & 88.20 & 43.17 & 88.45 & 43.01 & 88.39 \\
    % \midrule
    CIFAR10 & Small & 36/255 & 46.05 & 54.39 & 52.33 & 55.88 & 37.07 & 57.40 & 42.12 & 54.65 \\
    % CIFAR10 & Large & 36/255 & 59.51 & 66.29 & 60.46 & 60.92 & 46.03 & 62.85 & 59.51 & 66.29 \\
    % CIFAR10 & Resnet & 36/255  \\
    \bottomrule\\
    \end{tabular}}
\end{table}

Table~\ref{tab:l_inf_new} shows the results for $\ell_\infty$ robustness using non-sequentially trained constituent model ands Table~\ref{tab:l_2_new} shows the results for $\ell_2$ robustness.
%Additional results for $\ell_\infty$ robustness as well as for $\ell_2$ robustness are in Appendix~\ref{sec:more_eval} in the supplemental material.
We observe that, for non-sequentially trained models, the $\cra$ of uniform voting and weighted voting ensembles are comparable, and similar to the $\cra$ of the single best constituent model in the ensemble. In this case, the constituent models have too much overlap and almost no diversity. These results reaffirm our observation that voting ensembles require a balance between diversity and similarity to be effective.
%and our thought experiment provides us hope that this is possible in practice.

\begin{table}[!t]
    \caption{Learned weights for weighted voting ensemble with models pre-trained by \citet{wong2018scaling} for $\ell_\infty$ robustness.}
    \label{tab:l_inf_ensembles_weights}
    \centering
    \resizebox{\textwidth}{!}{
    \begin{tabular}{ c c c c | c }
    \toprule
    \toprule
	{} & {} & {} & number of & {} \\
	Dataset & Model & $\epsilon$ & models & weights \\
    \midrule
	MNIST & Small, Exact & 0.1 & 6 & [0.996, 0.003, 0.000, 0.001, 0.000, 0.000] \\
    MNIST & Small & 0.1 & 7 & [0.996, 0.000, 0.000, 0.003, 0.000, 0.000, 0.001] \\
    MNIST & Large & 0.1 & 5 & [0.996, 0.002, 0.001, 0.000, 0.001] \\
    \midrule
    MNIST & Small & 0.3 & 3 & [0.995, 0.003, 0.002] \\
    MNIST & Large & 0.3 & 3 & [0.948, 0.008, 0.044]\\
    \midrule
    CIFAR10 & Small & 2/255  & 5 & [0.995, 0.002, 0.001, 0.001, 0.001]  \\
    CIFAR10 & Large & 2/255 & 4 & [0.994, 0.003, 0.001, 0.002]\\
    % CIFAR10 & Resnet & 2/255  \\
    \midrule
    CIFAR10 & Small & 8/255 & 3 & [0.003, 0.995, 0.002]\\
    CIFAR10 & Large & 8/255 & 3 & [0.995, 0.002, 0.003]\\
    % CIFAR10 & Resnet & 8/255  \\
    \bottomrule\\
    \end{tabular}}
\end{table}

\begin{table}[!t]
    \caption{Learned weights for weighted voting ensembles with models pre-trained by \citet{wong2018scaling} for $\ell_2$ robustness.}
    \label{tab:l_2_ensembles_weights}
    \centering
    \resizebox{\textwidth}{!}{
    \begin{tabular}{ c c c c | c }
    \toprule
    \toprule
	{} & {} & {} & number of & {} \\
	Dataset & Model & $\epsilon$ & models & weights \\
    \midrule
	MNIST & Small, Exact & 1.58 & 6 & [0.995, 0.001, 0.001, 0.000, 0.003, 0.001] \\
    MNIST & Small & 1.58 & 6 & [0.995, 0.001, 0.002, 0.001, 0.001, 0.000] \\
    MNIST & Large & 1.58 & 6 & [0.996, 0.000, 0.001, 0.002, 0.000, 0.001] \\
    \midrule
    CIFAR10 & Small & 36/255  & 2 & [0.994, 0.006] \\
    CIFAR10 & Large & 36/255 & 6 & [0.994, 0.002, 0.001, 0.001, 0.001, 0.001]\\
    CIFAR10 & Resnet & 36/255 & 4 & [0.994, 0.004, 0.001, 0.001]\\
    \bottomrule\\
    \end{tabular}}
\end{table}

\begin{table}[!t]
    \caption{Learned weights for weighted voting ensemble with non-sequentially trained models for $\ell_\infty$ robustness.}
    \label{tab:l_inf_new_weights}
    \centering
    %\resizebox{0.6\textwidth}{!}{
    \begin{tabular}{ c c c c | c }
    \toprule
    \toprule
	{} & {} & {} & number of & {} \\
	Dataset & Model & $\epsilon$ & models & weights \\
    \midrule
	MNIST & Small, Exact & 0.1 & 3 & [0.710, 0.131, 0.159] \\
	MNIST & Small & 0.1 & 3 & [0.694, 0.154, 0.152] \\
    %MNIST & Large & 0.1 & 3 & [0.002, 0.995, 0.003] \\
    \midrule
	MNIST & Small & 0.3 & 3 & [0.908, 0.061, 0.031] \\
    %MNIST & Large & 0.3 & 3 & [0.003, 0.002, 0.995] \\
    \midrule
	CIFAR10 & Small & 2/255 & 3 & [0.011, 0.956, 0.0323]  \\
    %CIFAR10 & Large & 2/255 & 3 & [0.001, 0.004, 0.995]  \\
    \midrule
	CIFAR10 & Small & 8/255 & 3 & [0.042, 0.087, 0.871] \\
    %CIFAR10 & Large & 8/255 & 3 & [0.003, 0.995, 0.003] \\
    \bottomrule\\
    \end{tabular}
    %}
\end{table}

\begin{table}[!t]
    \caption{Learned weights for weighted voting ensembles with  non-sequentially trained models for $\ell_2$ robustness.}
    \label{tab:l_2_new_weights}
    \centering
    %\resizebox{\textwidth}{!}{
    \begin{tabular}{ c c c c | c }
    \toprule
    \toprule
	{} & {} & {} & number of & {} \\
	Dataset & Model & $\epsilon$ & models & weights \\
    \midrule
	MNIST & Small, Exact & 1.58 & 3 & [0.695, 0.159, 0.146] \\
    MNIST & Small & 1.58 & 3 & [0.660, 0.196, 0.144] \\
    %MNIST & Large & 1.58 & 3 & [0.003, 0.995, 0.002] \\
    % \midrule
    CIFAR10 & Small & 36/255  & 3 & [0.003, 0.002, 0.995] \\
    % CIFAR10 & Large & 36/255  & 2 & [0.006, 0.994]  \\
    \bottomrule\\
    \end{tabular}
    %}
\end{table}

\section{An Alternate Formulation of Uniform Voting Ensembler}
\label{sec:permutation_ensembler}

\begin{definition}[Permutation-based Cascading Ensembler]\label{def:permutation_based_cascading_ensembler}
    Let $\certify{F}^\id{0}, \certify{F}^\id{1}, \ldots, \certify{F}^\id{N-1}$ be $N$ certifiable classifiers and $N$ is odd. Suppose $\Pi$ is the set of all permutations of $\{0, 1, \ldots, N-1\}$.
A permutation-based cascading ensembler $\ensembler_\pcas: \mathbb{\certify{F}}^N \rightarrow \mathbb{\certify{F}} $ is defined as follows
    \begin{align*}
	    \ensembler_\pcas(\certify{F}^\id{0}, \certify{F}^\id{1}, \ldots, \certify{F}^\id{N-1})(x) := 
        \begin{cases} 
            	\certify{F}^\id{\pi_0}(x) & \text{if $\exists \pi \in \Pi$. \pcondcert}(\pi) = 1 \\
		(\lab{\certify{F}}^\id{\pi_0}(x),0) & \text{if $\not\exists \pi' \in \Pi$. \pcondcert}(\pi')= 1~\wedge~\exists \pi \in \Pi.~\pcondclean(\pi) = 1\\
		(*,0) & \text{otherwise}
        \end{cases}
        \end{align*}
    where $*$ is a random label selected from $\ClassSet$\footnote{One can also return the plurality prediction of all models for the consideration of clean accuracy but the choice of $*$ will not change the relevant theorems. 
	%(note this in the footnote firstly and will move to the main paragraph later.)
	}
	,$\pi_0$ refers to the first element of the permutation $\pi$,
        \begin{align}
            &\text{\pcondclean}(\pi) := \begin{cases} 
		    1 & \text{if } \exists j.~(\frac{N+1}{2} \leq j \leq N-1) \wedge (\forall i<j.~\lab{\certify{F}}^\id{\pi_i}(x) = \lab{\certify{F}}^\id{\pi_j}(x) ) \\
                0 & \text{otherwise}
            \end{cases}\\
            &\text{\pcondcert}(\pi) := \begin{cases} 
		    1 & \text{if } \exists j.~(\frac{N+1}{2} \leq j \leq N-1) \wedge (\forall i<j.~\certify{F}^\id{\pi_i}(x) = \certify{F}^\id{\pi_j}(x)) \wedge (\cer{\certify{F}}^\id{\pi_j}(x) = \robust) \\
                0 & \text{otherwise}
            \end{cases}
        \end{align} and $i, j \in \{0, 1, ..., N-1\}$.

\end{definition}

\begin{theorem}\label{thm:permutation}
	The permutation-based cascading ensembler $\ensembler_\pcas$ is a soundness-preserving ensembler.
%    If $\certify{F}^\id{0}, \certify{F}^\id{1}, ..., \certify{F}^\id{N-1}$ are sound, $\certify{\pcas}$ is sound. Namely, 
%    \begin{align}
%        \forall x \in \mathbb{R}^d, \certify{\pcas}(x) = \robust  \Longrightarrow \pcas \text{ is } \epsilon\text{-locally robust at } x.
%    \end{align}
\end{theorem}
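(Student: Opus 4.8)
The plan is to show that whenever the permutation-based cascade certifies robustness at a point $x$, a strict majority of the constituents not only agree on the predicted label $\ell$ but are individually certified (hence, by soundness, locally robust), and that this majority is large enough to force the prediction $\ell$ throughout the $\epsilon$-ball. First I would assume $\certify{F}^\id{0}, \ldots, \certify{F}^\id{N-1}$ are sound, set $\certify{F} := \ensembler_\pcas(\certify{F}^\id{0}, \ldots, \certify{F}^\id{N-1})$, fix an $x$ with $\cer{\certify{F}}(x) = \robust$, and unpack Def.~\ref{def:permutation_based_cascading_ensembler}. Since the second and third branches emit certificate $0$, the robust certificate must come from the first branch, so there exists $\pi$ with $\pcondcert(\pi) = 1$. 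By the definition of $\pcondcert$ this means that for some $j \geq \frac{N+1}{2}$ the classifiers at positions $\pi_0, \ldots, \pi_j$ all emit the identical pair $(\ell,\robust)$; hence at least $j+1 \geq \frac{N+3}{2}$ constituents output $(\ell,\robust)$ at $x$, and $\lab{\certify{F}}(x) = \ell$. I would note parenthetically that $\ell$ is unambiguous, since two distinct labels each supported by $\geq \frac{N+3}{2} > N/2$ classifiers would require strictly more than $N$ classifiers.

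Next I would invoke soundness of the constituents: each of these $\geq \frac{N+3}{2}$ classifiers reports $\robust$ and is sound, so it is $\epsilon$-locally robust at $x$ and therefore predicts $\ell$ at every $x'$ with $||x'-x|| \leq \epsilon$. Fixing such an $x'$, at most $N - \frac{N+3}{2} = \frac{N-3}{2}$ classifiers predict any label other than $\ell$ at $x'$. The crux is then a counting argument: because $\frac{N-3}{2} < \frac{N+3}{2}$, no label $\ell' \neq \ell$ can be agreed upon by $\frac{N+3}{2}$ classifiers at $x'$, so no permutation can satisfy $\pcondcert$ or $\pcondclean$ at $x'$ with a leading element predicting anything but $\ell$. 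Conversely, placing the $\geq \frac{N+3}{2}$ robust-at-$x$ classifiers in the leading positions of a permutation witnesses $\pcondclean = 1$ at $x'$ (take $j = \frac{N+1}{2}$), which rules out the third, random-label branch. Consequently, whichever of the first two branches applies at $x'$, the returned label is $\lab{\certify{F}}^\id{\pi_0}(x') = \ell$, so $\lab{\certify{F}}(x') = \lab{\certify{F}}(x)$; as $x'$ was arbitrary in the $\epsilon$-ball, $\lab{\certify{F}}$ is $\epsilon$-locally robust at $x$, establishing soundness.

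I expect the main obstacle to be bookkeeping around the three-way case split rather than any deep idea: I must verify both that the predicted label at $x'$ is well defined (independent of which witnessing permutation is selected) and that the third branch is genuinely unreachable at $x'$. Both reduce to the single combinatorial fact that a set of more than $N/2$ classifiers agreeing on $\ell$ is unique and cannot be matched by a competing label — the strict-majority property that makes the odd-$N$ thresholds $\frac{N+1}{2}$ and $\frac{N+3}{2}$ convenient to manipulate.
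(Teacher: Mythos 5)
Your proof is correct and follows essentially the same route as the paper's: extract the $\geq \frac{N+3}{2}$ constituents that output $(\ell,\robust)$ at $x$ from the witnessing permutation, use their soundness to carry the label $\ell$ to every $x'$ in the $\epsilon$-ball, and then apply the strict-majority counting argument to show that whichever branch of the definition fires at $x'$, the returned label must be $\ell$. You are in fact somewhat more explicit than the paper about the well-definedness of the winning label and the unreachability of the random-label branch, but the underlying argument is identical.
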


\begin{proof}
Let $\certify{F} := \ensembler_\pcas(\certify{F}^\id{0}, \certify{F}^\id{1}, \ldots, \certify{F}^\id{N-1})$.
For $\certify{F}$ we want to show that,
\begin{align}
	\forall x \in \mathbb{R}^d, \cer{\certify{F}}(x) = \robust  \Longrightarrow \lab{\certify{F}} \text{ is } \epsilon\text{-locally robust at } x.
\end{align}

	W.L.O.G suppose $ \lab{\certify{F}}(x) = y$. If $ \certify{F}(x) = (y,\robust)$, let us assume that $\pi$ is the permutation such that $\pcondcert(\pi) = 1$. Let $k$ be the integer that makes \pcondcert$(\pi)=1$ to be true. Thus

\begin{align}
	\certify{F}(x) = (y,\robust) \Longrightarrow (\frac{N+1}{2} \leq k \leq N-1) \wedge (\forall i<k, \certify{F}^\id{\pi_i}(x) = \certify{F}^\id{\pi_{k}}(x) = (y, \robust))
\end{align}

By our assumptions that $\certify{F}^\id{0}, \certify{F}^\id{1}, ..., \certify{F}^\id{N-1}$ are sound, which are invariant to the permutation of these models. Therefore, by Def.~\ref{def:soundness-of-F_Tilde}, $\forall i\leq k$,

\begin{align}
	\certify{F}^\id{\pi_i}(x) = (y,1) \Longrightarrow (\forall x'.~ ||x'-x||\leq \epsilon \Longrightarrow \lab{\certify{F}}^\id{\pi_i}(x') = \lab{\certify{F}}^\id{\pi_i}(x) = y) \label{Eq:y}
\end{align}

Eq.~(\ref{Eq:y}) implies that $\forall x' \text{ s.t. } ||x'-x||\leq \epsilon$, the following statement is true

\begin{align}
	(\frac{N+1}{2} \leq k \leq N-1) \wedge (\forall i\leq k, \lab{\certify{F}}^\id{\pi_i}(x) = \lab{\certify{F}}^\id{\pi_k}(x') = y) \label{Eq:x_prime}
\end{align}

Plug the condition~(\ref{Eq:x_prime}) into Def.~\ref{def:permutation_based_cascading_ensembler}, we find that
	$\pcondclean(\pi)=1$ for $x'$. Moreover, there cannot be a permutation $\pi'$ such that $\pcondclean(\pi')=\pcondcert(\pi')=1 \wedge \lab{\certify{F}}^\id{\pi'_0} \neq y$ since $k \geq \frac{N+1}{2}$.
	Therefore, $\lab{\certify{F}}(x')=\lab{\certify{F}}^\id{\pi_0}(x') = y$, and we arrive at the following statement,
\begin{align}
	\forall x'.~||x'-x||\leq \epsilon \Longrightarrow \lab{\certify{F}}(x') = \lab{\certify{F}}(x) = y
\end{align} which completes the proof for the soundness of $\lab{\certify{F}}$ at any $x$.

\end{proof}

%- Point out that Certifiable Robust Ensemble is actually a plurality based ensemble because it relies on that a majority of the model agrees.\\

\end{document}